\documentclass[11pt]{article}
\pdfoutput=1

\usepackage[margin=1in]{geometry}
\usepackage{setspace}
\usepackage{amsmath, amssymb, amsthm, bm, mathtools}
\usepackage{graphicx}
\usepackage{booktabs}
\usepackage{enumitem}
\usepackage{pgfplots}
\pgfplotsset{compat=1.17}
\usepackage{comment}
\usepackage{xcolor}
\usepackage[T1]{fontenc}    
\usepackage{hyperref}       
\usepackage{url}            
\usepackage{booktabs}       
\usepackage{amsfonts}       
\usepackage{nicefrac}       
\usepackage{microtype}      
\usepackage{xcolor}         
\usepackage{subcaption}
\usepackage{wrapfig}
\usepackage{booktabs}
\usepackage[normalem]{ulem}
\usepackage{comment}
\usepackage{enumitem}
\usepackage{float}

\newcommand{\argmax}{\operatorname*{argmax}}

\theoremstyle{plain}
\newtheorem{theorem}{Theorem}
\newtheorem{proposition}{Proposition}
\newtheorem{lemma}{Lemma}
\newtheorem{corollary}{Corollary}
\theoremstyle{definition}

\newtheorem{assumption}{Assumption}
\theoremstyle{remark}
\newtheorem{remark}{Remark}

\usepackage{bbm}
\usepackage{makecell}
\usepackage[authoryear]{natbib}

\bibpunct[, ]{(}{)}{,}{a}{}{,}

\title{Sampling Allocation of LinUCB: Optimal Design Limits in the Small-Gap Regime}

\author{%
  Yujie Liu \hspace{3em} Vincent Y.~F.~Tan \hspace{3em} Yunbei Xu\\
  National University of Singapore\\
  \texttt{\{yj\string-liu, vtan, yunbei\}@nus.edu.sg}
}
\date{}

\begin{document}

\maketitle

\begin{abstract}

We study the sampling allocation of LinUCB in the small-gap regime, where the reward gaps are of order at most $n^{-1/2}$ over the decision horizon $n$.  This scaling captures the hard instances underlying worst-case regret lower bounds, for which LinUCB is known to be near optimal up to logarithmic factors in $n$. Using a mean-field perspective, we characterize this allocation through the empirical sampling distribution, a macroscopic object that averages the effect of adaptive decisions over the horizon, and identify its limit as~$n\to\infty$.  We establish that in this regime, the empirical sampling distribution induced by LinUCB converges to the set of D-optimal designs. This central result reveals that, in the small-gap regime, LinUCB not only achieves near optimal minimax regret but also allocates samples in a way that is asymptotically efficient for learning the reward parameter, thereby connecting regret-driven online learning with information-efficient experimental design. Building on the optimal design limit, we obtain two useful consequences. First, we refine the asymptotic regret analysis of LinUCB in the small-gap regime by characterizing its leading-order constant in the limit. Second, we show that, despite LinUCB's adaptive sampling strategy, the regularized least-squares estimator satisfies a central-limit-type theorem in the small-gap regime, thereby enabling valid statistical inference for the reward parameter.

\end{abstract}

\section{Introduction}
Linear bandits provide a canonical framework for sequential decision making with correlated rewards and have become a fundamental model for optimizing machine learning systems under uncertainty~\citep{LiEtAl2010}. In such applications, a learner repeatedly chooses among competing actions, such as personalized recommendation~\citep{LiEtAl2010}, cloud configurations~\citep{AlipourfardEtAl2017}, or adaptive video streaming~\citep{MaoEtAl2017}, and observes noisy performance feedback from real system operations. By learning from this feedback, the learner aims to improve system performance over time and make increasingly informed decisions.

Traditional analyses of linear bandits have primarily focused on regret minimization~\citep{DaniEtAl2008,ChuEtAl2011,abbasi2011improved}. Regret
is a cumulative reward criterion: it evaluates the total performance loss
relative to an oracle policy.  While regret is the fundamental benchmark
for decision quality, it does not directly describe the algorithm's
{\it sampling allocation}, namely how samples are adaptively allocated across the action space over time. This perspective dates back to the seminal work of~\citet{LaiRobbins1985}, whose terminology---``adaptive allocation rules''---reflects the problem's origins in experimental design. 
We study this complementary allocation question for linear bandits. The sampling
allocation is a finer object than cumulative regret: it records how
information accumulates, explains
the mechanism behind regret guarantees, and provide the right foundation
for refined asymptotic questions, including asymptotic regret and  statistical inference for the reward
parameter. Motivated by this perspective, we focus on the sampling allocation of LinUCB, one of the most widely used algorithms for linear bandits in real-world applications. LinUCB achieves
near optimal minimax regret guarantees up to logarithmic factors~\citep{ChuEtAl2011}, yet its  sampling behavior remains much
less understood.

We characterize LinUCB's sampling allocation in the {\it small-gap regime}, which captures a worst-case setting for regret minimization. This regime is formalized through a triangular-array framework: in the $n$th instance, the reward gaps are of order at most $n^{-1/2}$ over the decision horizon $n$. This $n^{-1/2}$ scaling underlies the hard instances used to establish minimax regret lower bounds for linear bandits~\citep{dani2007price,lattimore2020bandit}. Since LinUCB achieves near optimal minimax regret guarantees up to logarithmic factors, this regime provides a natural setting for studying the sampling allocation underlying its worst-case regret performance. Intuitively, gaps on the $n^{-1/2}$ scale keep differences in expected rewards comparable to statistical estimation errors over the horizon, so identifying the optimal action remains challenging. In practice, this captures settings such as online experiments comparing product variants, where the available experimental budget is insufficient to reliably distinguish between alternatives, yet repeated suboptimal decisions can still lead to non-negligible cumulative losses.


In this paper, we ask the following central question:
\begin{quote}{\it How can we characterize the sampling allocation of LinUCB in the small-gap regime?}
\end{quote}

Characterizing this sampling allocation of an adaptive linear bandit algorithm is challenging because its decisions are history-dependent, feedback-driven, and coupled through a correlated linear reward model. To make this allocation behavior analytically tractable, we adopt a \emph{mean-field} perspective in the classical statistical mechanics sense: rather than tracking the microscopic sequence of adaptive decisions, we characterize the limiting behavior of a macroscopic object, namely the empirical sampling distribution, as the decision horizon grows. This object averages the effect of adaptive decisions over the horizon.

Under this mean-field perspective, we answer the above question by identifying the limit of the empirical sampling distribution in the small-gap regime. Specifically, we show that the empirical sampling distribution induced by LinUCB converges to the set of D-optimal designs~\citep{Kiefer1959}. Thus, LinUCB’s long-run sampling allocation is governed by an optimal design principle, revealing that in the small-gap regime, LinUCB not only
achieves near optimal minimax regret but also allocates samples in a way that is asymptotically efficient for
learning the reward parameter.


Together with this result, the mean-field perspective manifests itself in three related ways. First, in the macroscopic sense, the empirical sampling distribution summarizes the average effect of adaptive decisions. Inspired by statistical mechanics~\citep{Ellis1985Entropy}, we view individual decisions as microscopic dynamics and the empirical sampling distribution as a macroscopic object that records their collective evolution. Second, the mean-field description becomes asymptotically exact in the large-horizon, small-gap regime.
 Third, under LinUCB, the optimal-design limit shows that this macroscopic limit is characterized by the maximizers of a potential function, namely the D-optimality criterion~\citep{Kiefer1959}, a structure that does not arise in the same form for ordinary multi-armed bandits~\citep{KalvitZeevi2021,KhamaruZhang2024InferenceUCB,Han2026ThompsonSampling}. These three aspects—the macroscopic interpretation, the limiting framework, and the potential function characterization—are summarized in Table~\ref{tab:meanfield_linucb}.

\begin{table}[htbp]
\centering
\small
\setlength{\tabcolsep}{10pt}
\renewcommand{\arraystretch}{1.5}
\begin{tabular}{p{2cm} p{6cm} p{6cm}}
\toprule
 & \textbf{Mean-field statistical mechanics} & \textbf{LinUCB in small-gap regime} \\
\midrule

\textbf{Mean-field viewpoint}
&
Average many interacting microscopic degrees of freedom into a macroscopic object
&
Average adaptive sampling decisions over the decision horizon into the empirical sampling distribution
\\[0.5em]
\midrule
\textbf{Microscopic variables}
&
Microscopic component states, e.g., $(X_1,\ldots,X_N)$
&
Adaptive sampling decisions
$(A_1^{(n)},\ldots,A_n^{(n)})$
\\[0.5em]

\textbf{Macroscopic object}
&
An order parameter or empirical observable, e.g., $\mu_N=\frac1N\sum_{i=1}^N f({X_i})$

&
The empirical sampling distribution
$
\pi_n^{(n)}=\frac1n\sum_{t=1}^n \delta_{A_t^{(n)}}
$
\\
\midrule
\textbf{System size}
&
Number of components $(N\to\infty)$
&
Decision horizon $(n\to\infty)$
\\

\textbf{Fixed scaling quantity}
&
Keep suitable intensive parameters fixed, such as density or temperature
&
Small-gap regime: reward gaps $\Delta^{(n)}$ scale \makecell{with the horizon $n$ according to}
{$
\Delta^{(n)}=O( n^{-1/2})
$}
\\
\midrule
\textbf{Potential function}
&
A variational objective characterizing the macroscopic state
&
The D-optimality criterion
$\Phi(\pi)=\log\det M(\pi)$, where $M(\cdot)$ is given in~\eqref{def:info_matrix}
\\[1em]

\textbf{Limit characterization}
&
The limiting macroscopic state is characterized by optimizers of the potential function
&
\makecell{The empirical sampling distribution con-}{verges to
$\mathcal P_D=\argmax_{\pi\in\mathcal P(\mathcal A)}
\Phi(\pi)$} (Theorem~\ref{thm:linucb-general-D-optimal})
\\[1em]
\bottomrule
\end{tabular}
\caption{A Mean-Field Perspective on LinUCB in Small-Gap Regime}
\label{tab:meanfield_linucb}
\end{table}

\subsection{Contributions}

We summarize our contributions below.

\begin{itemize}

\item \textbf{An optimal-design limit for LinUCB in small-gap regime.}
We establish that in the small-gap regime, the empirical sampling distribution generated by LinUCB converges to the set of D-optimal designs. In our setting, D- and G-optimality are equivalent by the equivalence theorem of~\citet{Kiefer_Wolfowitz_1960}. Both are classical optimal-design criteria for efficient parameter estimation~\citep{AtkinsonDonevTobias2007}.
Our result therefore shows that, in the small-gap regime, although LinUCB is designed for regret minimization, its long-run sampling allocation is also asymptotically efficient for parameter learning in the D-optimality sense.

The proof of this result further reveals the mechanism behind the optimal-design limit. As the horizon $n$ grows, reward differences become small relative to the exploration bonus. Consequently, after a negligible initial period, LinUCB behaves like an approximate variance-maximization procedure, repeatedly selecting actions with nearly maximal uncertainty in the estimated reward. This is closely related to the classical Fedorov--Wynn procedure for constructing D-optimal designs, which sequentially selects design points by maximizing the current variance function~\citep{Wynn1970,Fedorov1972}. 
We extend this convergence mechanism to an approximate triangular-array setting and show that it drives the empirical sampling distribution toward the set of D-optimal designs.

\item \textbf{A mean-field perspective on sampling allocation in small-gap regime.}
As summarized in Table~\ref{tab:meanfield_linucb}, we develop a mean-field perspective in which the individual adaptive decisions of LinUCB are viewed as microscopic dynamics, while the empirical sampling distribution serves as a macroscopic state that averages their aggregate behavior over the decision horizon.
The role of this perspective is to identify the empirical sampling distribution as the natural large-horizon object for studying sampling allocation and to characterize its limiting behavior through a variational principle.
In the small-gap regime, the D-optimality criterion serves as a potential function whose maximizers determine the limiting macroscopic state.

\item \textbf{A refined asymptotic regret analysis for LinUCB in small-gap regime.}
Building on the optimal-design limit, we characterize the instance-specific asymptotic regret of LinUCB in the small-gap regime for various (finite) action sets. We show that regret grows on the $\sqrt n$ scale, with its leading constant determined by the average scaled reward gap under a D-optimal design. When the D-optimal design is unique, this identifies the coefficient of the $\sqrt{n}$  term in the regret asymptotics; when it is not unique, our result provides an asymptotic envelope of possible leading constants. For the nonunique case, we further consider the Boolean hypercube action set as an illustrative example. In this example, the coefficient of the $\sqrt n$ term can still be identified, and the resulting regret order is consistent with the minimax lower bounds~\citep{dani2007price}.

\item \textbf{Statistical inference after adaptive sampling in small-gap regime.}
We further study statistical inference based on data collected by LinUCB. Unlike classical i.i.d. data, adaptively collected observations exhibit history-dependent structure that can invalidate standard inferential procedures. We show that, nevertheless, the regularized least-squares estimator under LinUCB sampling satisfies a standard $\sqrt n$-asymptotic normality result in the small-gap regime. This contrasts with the fixed-gap analysis of LinUCB for the unit-ball action set studied by \citet{fan2025statistical}, where the convergence rate may deteriorate to $n^{1/4}$ in directions orthogonal to the true parameter. This comparison reveals that, relative to the fixed-gap regime, although the small-gap regime is statistically harder for regret minimization, it leads to a more information-efficient sampling allocation for parameter learning.

\end{itemize}

Taken together, these results explain how LinUCB's sampling allocation shapes both its regret performance and its statistical properties in the small-gap regime. The optimal-design limit governs the leading-order regret, while the asymptotic normality of the regularized least-squares estimator provides a theoretical basis for constructing asymptotically valid confidence regions from the same adaptively collected data.

\subsection{Related Works}
\textbf{\quad Sampling allocation and inference of UCB-type algorithms.}
A growing literature studies the sampling allocation of UCB-type algorithms and its implications for asymptotic inference. For multi-armed bandits, \citet{KalvitZeevi2021} analyze the sampling allocation of UCB1 in the two-armed setting and provide a complete characterization of its phase-transition behavior. \citet{KhamaruZhang2024InferenceUCB} extend this sampling allocation analysis to the multi-armed setting and establish central-limit-type results for the sample mean of each arm. \citet{fan2022typical} establish strong laws of large numbers and central limit theorems for the sampling allocation of UCB and Thompson sampling in the fixed-gap regime. Extending these results to linear bandits is challenging because actions are coupled through a shared design matrix rather than sampled independently across arms, making both the sampling allocation and the resulting inference substantially harder to analyze. Recent work by \citet{fan2025statistical} analyzes the eigenstructure of the LinUCB design matrix and develops central-limit-type inference results in the fixed-gap regime. In contrast, our work focuses on the small-gap regime and shows that the sampling allocation is governed by a fundamentally different, design-driven mechanism. This distinction also manifests itself in inference: although both regimes yield central-limit-type results, the reward-parameter estimator converges at different rates.

\textbf{Optimal experimental design.}
Optimal experimental design studies how samples should be allocated to efficiently estimate unknown parameters.
Classical sequential procedures such as the Fedorov--Wynn algorithm construct D-optimal designs \citep{Kiefer1959,AtkinsonDonevTobias2007} by repeatedly selecting design points with the largest current  variance function \citep{Wynn1970,Fedorov1972}.
Our work connects this classical design principle with LinUCB by showing that, in the small-gap regime, its sampling dynamics asymptotically follow an approximate Fedorov--Wynn-type procedure. Optimal-design principles have also been used for pure exploration in linear bandits: \citet{NIPS2014_f8d84caa} introduce a static G-optimal allocation for best-arm identification and further develop static and adaptive sampling strategies that target the estimation of reward differences.

\textbf{Sampling allocation and inference for other bandit algorithms.}
Beyond UCB-type algorithms, a parallel line of work studies Thompson sampling and related randomized policies. \citet{Han2026ThompsonSampling} characterizes the sampling allocation and downstream inference of Thompson-sampling algorithm, showing that, unlike UCB1, the sampling limits of certain arms can be stochastic rather than deterministic. This stochasticity can further induce non-Gaussian limits for the corresponding sample means. Other works modify Thompson-sampling-based procedures to improve their inferential properties while preserving regret performance\citep{DimakopoulouRenZhou2021,HalderPanKhamaru2025StableTS}. These studies emphasize that, beyond regret minimization, inference after adaptive data collection is a central consideration in bandit algorithms. This perspective identifies an algorithm's macroscopic sampling allocation as a fundamental object of study.

\textbf{Diffusion approximations and refined regret analysis in small-gap regime.}
A related line of work studies bandit algorithms in the small-gap regime, with particular attention to the case where reward gaps are of order $n^{-1/2}$ over a decision horizon $n$. This regime captures the hard instances underlying worst-case regret lower bounds and enables process-level characterizations of algorithm dynamics and refined regret analyses. \citet{KalvitZeevi2021} develop diffusion approximations for UCB1, showing that the scaled reward processes converge to Brownian motions with drift. Beyond the $n^{-1/2}$ scaling, they also identify the leading-order regret constant when reward gaps are of order $\sqrt{(\log n)/n}$ over a decision horizon $n$.   \citet{KuangWager2024WeakSignal} and \citet{FanGlynn2021DiffusionTS} establish diffusion approximations for Thompson sampling, under which the sampling process converges to the solution of a stochastic differential equation (SDE). Building on this limit, \citet{KuangWager2024WeakSignal} provide insights into the regret performance and optimality of randomized sampling algorithms, while \citet{FanGlynn2021DiffusionTS} show that the regret performance of Thompson sampling is generally insensitive to model misspecification. Together, these works provide refined characterizations of algorithm dynamics and regret performance in various small-gap regimes. However, this line of research has focused primarily on multi-armed bandits, leaving general linear bandits comparatively less understood.

\textbf{Asymptotic framework for bandit learning.}
Some recent works use diffusion-based framework to study bandit learning. This framework simplifies the underlying stochastic dynamics and enables the use of analytical tools such as SDEs and partial differential equations (PDE). \citet{Adusumilli2025RiskBanditExperiments} characterizes asymptotic Bayes and minimax risks in multi-armed bandit problems through PDEs and derives optimal policies from their numerical solutions. \citet{Lattimore2026DiffusionPolicyGradient} studies softmax policy gradient for stochastic bandits through a diffusion approximation framework. The analysis characterizes the resulting SDE dynamics and shows that regret performance is highly sensitive to the learning rate, with poorly tuned learning rates leading to linear regret on certain multi-armed instances. These works show that the asymptotic framework simplifies the analysis and allows refined characterizations of bandit algorithms beyond what regret bounds alone can capture.

\textbf{Mean-field asymptotics in high-dimensional statistics.}
Our mean-field perspective is conceptually related to recent mean-field asymptotic analyses in high-dimensional statistics. \citet{Montanari2018MeanField} shows in problems such as high-dimensional linear regression and principal component analysis, suitably normalized estimators and iterative algorithms admit precise deterministic macroscopic limits. Both this line of work and ours replace complex microscopic random dynamics by tractable macroscopic descriptions, but the limiting objects are fundamentally different.
In high-dimensional statistics, the sample size and ambient dimension typically grow together, often at a fixed aspect ratio, and the mean-field limit characterizes the asymptotic behavior of estimators or optimization algorithms. By contrast, our linear bandit model studies an adaptive sampling process over $n$ rounds where the reward gaps of order $n^{-1/2}$, the local asymptotic scaling at which signal and statistical fluctuations remain comparable.
The resulting mean-field limit characterizes the empirical sampling distribution induced by LinUCB, rather than the asymptotic distribution of an estimator.

\subsection{Outline of the Paper and Notation}
\label{subsec:outline-notation}

The rest of the paper is organized as follows.
Section~\ref{sec:preliminaries} introduces the triangular-array linear bandit framework in the small-gap regime, specifies the LinUCB algorithm, and formulates the empirical sampling distribution as the macroscopic object through which we study sampling allocation. Section~\ref{sec:main-results} establishes the optimal-design limit for LinUCB, showing that its empirical sampling distribution converges to the set of D-optimal designs. Building on this limit, it further provides a sharp asymptotic characterization of the pseudo-regret of LinUCB in the small-gap regime.
Section~\ref{sec:asymptotic-inference} studies statistical inference based on data collected by LinUCB, establishing a central-limit-type theorem for the regularized least-squares estimator in the small-gap regime.  Finally, Section~\ref{sec:conclusion} concludes the paper. 

We use the following notation throughout. Let $I_d$ denote the $d \times d$ identity matrix. For a square matrix $M\in\mathbb R^{d\times d}$, let $\operatorname{tr}(M)$ denote its trace. For a positive semidefinite matrix $M$, let $\|x\|_M=(x^\top Mx)^{1/2}$. For symmetric matrices $A$ and $B$, we write $A\succeq B$ if $A-B$ is positive semidefinite. For $a\in \mathbb R^d$, let $\delta_a$ denote the Dirac measure at $a$. Finally, $\xrightarrow{\mathbb P}$ denotes convergence in probability, and $\Rightarrow$ denotes weak convergence.

\section{Preliminaries}
\label{sec:preliminaries}

This section establishes the asymptotic framework used throughout the paper. We first formulate the linear bandit problem in a triangular-array setting, then describe the LinUCB algorithm under this convention, and finally state our objective of characterizing the limiting behavior of the empirical sampling distribution.

\subsection{Triangular-Array Linear Bandits in the Small-Gap Regime}

We first introduce the triangular-array formulation, which allows the reward parameter to vary across problem instances indexed by the decision horizon. This flexibility is essential for capturing the small-gap regime, where reward differences scale with the horizon.

We consider a sequence of linear bandit instances indexed by the horizon $n$, all sharing the same action set $\mathcal A\subset \mathbb R^d$.  In the $n$th instance, the learner interacts with the environment for $n$ rounds. At each round $t=1,\ldots,n$, the learner chooses an action
\[
A_t^{(n)}\in\mathcal A,
\]
possibly based on the past observations. The observed reward is
\begin{align}\label{eq:observation}
X_t^{(n)}=\left\langle A_t^{(n)},\theta_\star^{(n)}\right\rangle
+\eta_t^{(n)},
\end{align}
where $\theta_\star^{(n)} \in \mathbb R^d$ is the unknown reward parameter and $\eta_t^{(n)}$ is the noise at round $t$. Here we use a horizon-indexed triangular-array convention, also known as a
sequence-of-instances asymptotic regime. The
superscript ${}^{(n)}$ indexes the instance, while the subscript
$t$ indexes time within that instance. Unlike a single stochastic process
observed over increasing horizons, this formulation does not impose projective
consistency across $n$: for $m\ne n$, the variables
$X_t^{(n)}$ and $X_t^{(m)}$ need not have the same distribution.

Let $\mathcal F_{t}^{(n)}$ denote the filtration generated by the history up to time $t$. That is,
\[
\mathcal F_t^{(n)}
:=\sigma\left(A_1^{(n)},X_1^{(n)},\ldots,A_t^{(n)},X_t^{(n)}\right),
\qquad t\ge 1,
\]
with $\mathcal F_0^{(n)}$ taken to be the trivial $\sigma$-field. We impose the following condition on the noise process.

\begin{assumption}[Subgaussian noise]\label{ass:subgua_noise}
For the $n$th instance, the noise sequence
$\{\eta_t^{(n)}\}_{1\le t\le n}$
is conditionally mean-zero and uniformly $\sigma$-subgaussian given the past history and the selected action. Specifically, for every $1\le t\le n$,
\[
\mathbb E\left[\eta_t^{(n)}\mid
\mathcal F_{t-1}^{(n)},A_t^{(n)}\right]=0,
\]
and
\[
\mathbb E\left[\exp\{\tau \eta_t^{(n)}\}\mid
\mathcal F_{t-1}^{(n)},A_t^{(n)}\right]
\le\exp\left(\frac{\tau^2\sigma^2}{2}\right), \qquad \forall \tau\in\mathbb R .
\]
\end{assumption}

For technical convenience, we restrict attention to finite action sets, as formalized in the following assumption. This restriction does not affect the main message of the paper, since the action sets covered by our analysis may still be exponentially large in the dimension. For instance, the Boolean hypercube action set in
Corollary~\ref{coro:hypercube} has cardinality $2^d$.
\begin{assumption}[Finite action set]
\label{ass:bounded-action-set}
The action set $\mathcal A\subset\mathbb R^d$ is finite. Consequently, there exists a constant $L<\infty$ such that
\[
\|a\|_2 \le L,
\qquad \forall a\in\mathcal A .
\]
\end{assumption}

Since we work with a triangular-array model, the unknown reward parameter may depend on the horizon $n$. The next assumption requires this sequence of parameters to remain uniformly bounded.

\begin{assumption}[Uniformly bounded reward parameters]
\label{ass:bounded-parameters}
The sequence of reward parameters $\{\theta_\star^{(n)}\}_{n\ge 1}\subset \mathbb R^d$ is uniformly bounded. That is, there exists a constant $C_{\theta_\star}<\infty$ such that
\[
\|\theta_\star^{(n)}\|_2 \le C_{\theta_\star},
\qquad \forall n\ge 1.
\]
\end{assumption}
Finally, we impose the small-gap condition that defines the asymptotic regime studied in this paper. It requires reward differences across the action space to be of order at most $n^{-1/2}$.

\begin{assumption}[Small-gap condition]\label{asm small gap}  There exists some problem-dependent constant $C_{\mathcal{A}}$ depending on $\mathcal{A}$ but independent of $n$  such that,
\begin{align*}
    \max_{a\in\mathcal{A}} \langle a,\theta^{(n)}_{\star}\rangle-\min_{b\in\mathcal{A}} \langle b,\theta^{(n)}_{\star}\rangle\leq \frac{C_{\mathcal{A}}}{\sqrt{n}},\qquad \forall n\ge 1.
\end{align*}
\end{assumption}
Assumption~\ref{asm small gap} states that the reward range over the action set is uniformly of order at most $n^{-1/2}$. Thus, in the $n$th bandit instance, the largest reward gap between any two actions is no greater than the natural $n^{-1/2}$ estimation uncertainty over a horizon of length $n$. This scaling captures the statistically hardest regime for learning: the reward gaps are small enough that actions cannot be reliably distinguished without substantial exploration. It is precisely this type of small-gap construction that underlies minimax regret lower bounds in bandit problems; see, for example, \citet{lattimore2020bandit}.
The condition is related to the small-gap regimes studied
in multi-armed bandits, including the two-arm setting of
\citet{KalvitZeevi2021} and the $K$-armed setting of
\citet{KhamaruZhang2024InferenceUCB}. Our formulation imposes a stronger
global condition, requiring the entire reward range over the action set
to shrink at the rate $n^{-1/2}$.

The preceding setup and assumptions define the triangular-array linear bandit framework in the small-gap regime. We next introduce the classical LinUCB algorithm within this framework.

\subsection{The LinUCB Algorithm}

We now specify the LinUCB algorithm within the triangular-array framework, including its regularized design matrix, regularized least-squares estimator, and action selection rule.
 Fix a regularization parameter $\lambda>0$. For the $n$th problem instance, define 
\[
V_t^{(n)}=\lambda I_d+\sum_{s=1}^t A_s^{(n)}(A_s^{(n)})^\top, \qquad 0\le t\le n,\]
the regularized design matrix accumulated up to time $t$.
The corresponding regularized least-squares estimator is
\begin{align}\label{eq:least_square}
\widehat \theta_t^{(n)}=\bigl(V_t^{(n)}\bigr)^{-1}
\sum_{s=1}^t A_s^{(n)} X_s^{(n)}, \qquad 0\le t\le n,
\end{align}
where the summation is understood to be zero when $t=0$.

At each round $t=1,\ldots,n$, LinUCB selects an action according to
\[
A_t^{(n)}\in\argmax_{a\in\mathcal A}\left\{
a^\top \widehat\theta_{t-1}^{(n)}+\sqrt{\beta_t}  
\|a\|_{\left(V_{t-1}^{(n)}\right)^{-1}}\right\},
\]
where 
\begin{align}
\label{def:beta_t}
\sqrt{\beta_t}=\sqrt{\lambda}C_{\theta_\star}+\sqrt{2\log t+ d\log\left(1+\frac{t L^2}{d\lambda}\right)},\qquad 1\le t\le n.
\end{align}
This choice corresponds to the standard confidence radius used in linear bandits; see, for example, \citet{lattimore2020bandit}. The analysis in this paper does not rely on the precise expression in \eqref{def:beta_t}. It only requires that the exploration radius $\sqrt{\beta_t}$ is deterministic, nondecreasing in $t$, and satisfies, for fixed $d,L,\lambda,\sigma$,
\[
\sqrt{\beta_t}=\Omega(\sqrt{\log t}),
\]
or equivalently, $\beta_t=\Omega(\log t)$.

\subsection{Empirical Sampling Distributions}

We aim to characterize LinUCB’s sampling allocation in the small-gap regime. Such
a characterization reveals how information accumulates, explains the mechanisms behind regret guarantees, and supports refined asymptotic analysis. Analyzing this behavior is challenging because each sampling decision is history-dependent, feedback-driven, and coupled with other decisions through a shared linear reward model. We adopt a mean-field perspective: rather than tracking individual adaptive decisions, we study the empirical sampling distribution as a macroscopic object that summarizes the algorithm's average behavior.

For $1\le t\le n$,  define 
\[
\pi_t^{(n)}:=\frac{1}{t}\sum_{s=1}^t \delta_{A_s^{(n)}}
\in \mathcal P(\mathcal A)
\]
the empirical sampling distribution up to time $t$,
where $\mathcal P(\mathcal A)$ denotes the set of Borel probability measures on $\mathcal A$. Our goal is therefore to characterize the limiting behavior of the full-horizon empirical sampling distribution $\pi_n^{(n)}$ as $n\to\infty$.

To formulate this limiting behavior, we endow $\mathcal P(\mathcal A)$ with a metric.  For $\mu,\nu\in\mathcal P(\mathcal A)$, define the $1$-Wasserstein distance by
\[
W_1(\mu,\nu):=
\sup_{f: \mathcal{A}\to\mathbb{R},~\mathrm{Lip}(f)\le 1}
\left|\int_{\mathcal A} f(a)  \mu({\rm d} a)
-\int_{\mathcal A} f(a)  \nu({\rm d} a)\right|.
\]
Since $\mathcal A$ is compact under Assumption~\ref{ass:bounded-action-set}, $W_1$ metrizes weak convergence on $\mathcal P(\mathcal A)$; see, e.g., \citet[Theorem~6.9]{Villani2009OptimalTransport}. Moreover, for any nonempty set $\mathcal P\subseteq \mathcal P(\mathcal A)$, define
\[
\operatorname{dist}_{W_1}(\mu,\mathcal P):= \inf_{\nu\in\mathcal P} W_1(\mu,\nu).
\]
This allows us to formulate convergence to a set that may not be a singleton.

Within the triangular-array framework in the small-gap regime, the mean-field formulation identifies the empirical sampling distribution as the central macroscopic object. The next section uses this framework to analyze the limiting behavior of $\pi_n^{(n)}$ as $n\to\infty$.

\section{Optimal-Design Limit and Regret Analysis of LinUCB}
\label{sec:main-results}

In this section, we study the sampling behavior and regret performance of LinUCB in the small-gap regime. 
We first identify the limiting empirical sampling distribution and show that it is governed by a classical optimal-design principle. 
Building on this structural result, we then obtain a refined asymptotic characterization of LinUCB's pseudo-regret in the same regime.

 The small-gap regime is particularly relevant for regret analysis because it captures the hard instances underlying minimax regret lower bounds in linear bandits. Since LinUCB is known to achieve near optimal minimax regret up to logarithmic factors, studying both the sampling distribution and the resulting regret in this regime provides a natural way to understand its worst-case performance. 

\subsection{Optimal-Design Limit in the Small-Gap Regime}
\label{subsec:optimal-design-limit}
We now identify the asymptotic structure of LinUCB's sampling allocation. 
Our main result shows that, despite the history-dependent nature of its sequential decisions, the empirical sampling distribution converges to the set of D-optimal designs. 

We begin by recalling the classical optimal experimental design problem, which seeks to allocate samples across a prescribed design space to estimate an unknown parameter as efficiently as possible~\citep{Kiefer1959}. 
In our linear setting~\eqref{eq:observation}, the design space is the action set $\mathcal A$, and the unknown parameter in the $n$th instance is $\theta_\star^{(n)}$. A design is represented by a probability distribution $\pi\in\mathcal P(\mathcal A)$ specifying the sampling allocation over the action set, with associated information matrix
\begin{align}\label{def:info_matrix}
M(\pi):=\int_{\mathcal A} aa^\top \pi({\rm d} a).
\end{align}
A classical criterion is D-optimality, which chooses the design to maximize the determinant of the information matrix. For a fixed action set $\mathcal A$, this criterion is independent of the unknown parameter $\theta_\star^{(n)}$.  The D-optimality criterion is given by
\[
\Phi(\pi)=\log\det M(\pi)
\]
with the convention that $\Phi(\pi)=-\infty$ whenever $M(\pi)$ is singular. The set of D-optimal designs is then defined as
\[
\mathcal P_D:=\argmax_{\pi\in\mathcal P(\mathcal A)}
\Phi(\pi).
\]
The D-optimality criterion has a natural statistical interpretation. 
Under a fixed design with sampling proportions $\pi$ and independent Gaussian errors of common variance, the covariance matrix of the least-squares estimator is proportional to $M(\pi)^{-1}$, provided that $M(\pi)$ is nonsingular. For a fixed sample size and confidence level, maximizing $\det M(\pi)$ therefore minimizes the determinant of the covariance matrix and, geometrically, the volume of the associated confidence ellipsoid for the unknown parameter~\citep{AtkinsonDonevTobias2007}. In this sense, D-optimality reduces overall estimation uncertainty.

We now state the main convergence theorem, which characterizes the limiting behavior of the empirical sampling distribution generated by LinUCB in the small-gap regime.

\begin{theorem}[Optimal-design limit of LinUCB in the small-gap regime]
\label{thm:linucb-general-D-optimal}
Suppose that Assumptions~\ref{ass:subgua_noise}--\ref{asm small gap} hold and that
$\operatorname{span}(\mathcal A)=\mathbb R^d$. Then as $n \to \infty$, the empirical sampling distribution $\pi_n^{(n)}$ generated by LinUCB satisfies
\[
\operatorname{dist}_{W_1}\left(\pi_n^{(n)},\mathcal P_D\right)
\xrightarrow{\mathbb P}0.
\]
In particular, if the D-optimal design is unique, that is,
$\mathcal P_D=\{\pi^\star\}$, then
\[
\pi_n^{(n)}\Rightarrow \pi^\star \qquad\text{in probability}.
\]
\end{theorem}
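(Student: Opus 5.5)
The plan is to reduce LinUCB, after a short burn-in, to an approximate Fedorov--Wynn procedure. We then run a potential-function argument on $\Psi_t:=\log\det(V_t^{(n)}/t)$, which is the D-criterion of $\pi_t^{(n)}$ up to the vanishing perturbation $\lambda I_d/t$. We fix $\varepsilon\in(0,1)$ and work on the window $t\in[\varepsilon n,n]$, letting $\varepsilon\downarrow 0$ only at the end. Two facts from the Kiefer--Wolfowitz equivalence theorem will be used throughout:
\[
\max_{a\in\mathcal A}\|a\|^2_{M(\pi)^{-1}}\ \ge\ d \quad\text{with equality iff } \pi\in\mathcal P_D,
\]
and, by concavity of $\Phi$,
\[
\Phi^\star-\Phi(\pi)\ \le\ \max_{a}\|a\|^2_{M(\pi)^{-1}}-d .
\]
Define the gap $g(\pi):=\max_a\|a\|^2_{M(\pi)^{-1}}-d\ge 0$.

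\textbf{Step 1 (crude exploration and linear growth of $V_t$).} On the standard confidence event, $\|\widehat\theta_{t-1}-\theta_\star\|_{V_{t-1}}\le C\sqrt{\beta_t}$ for all $t$, with probability $\ge 1-\delta$. On this event the index of any $a$ is at least $a^\top\theta_\star$, while the index of $A_t$ is at most $A_t^\top\theta_\star+(C+1)\sqrt{\beta_t}\|A_t\|_{V_{t-1}^{-1}}$. With Assumption~\ref{asm small gap} this gives
\[
\|A_t\|_{V_{t-1}^{-1}}\ \ge\ c_0\max_a\|a\|_{V_{t-1}^{-1}}-O\bigl(n^{-1/2}/\sqrt{\beta_t}\bigr),
\]
a constant-factor approximate Wynn step. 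I then apply the potential argument of Step 3 in a weak form, using only the constant factor $c_0^2$ rather than $1-o(1)$. The aim is to show $\max_a\|a\|^2_{V_t^{-1}}\le K/t$ for all $t\ge\varepsilon n$, and hence, since $\operatorname{span}(\mathcal A)=\mathbb R^d$, $\lambda_{\min}(V_t)\ge c\,t$ on $[\varepsilon n,n]$ with high probability. I expect this step to be the main obstacle. The elliptical-potential lemma alone only gives a $\log t/t$ bound on average, so one has to exploit that $\log\det(V_t/t)$ is bounded above (by $\Phi^\star+o(1)$) while each round with a large maximal variance raises it by a definite amount. One also has to handle the $O(n^{-1/2})$ perturbation at small $t$, where the bonus is not yet dominant.

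\textbf{Step 2 (estimation error is negligible relative to the bonus).} Write $\widehat\theta_t-\theta_\star=V_t^{-1}(S_t-\lambda\theta_\star)$ with $S_t=\sum_{s\le t}A_s\eta_s$. Doob's maximal inequality gives $\max_{t\le n}\|S_t\|_2=O_{\mathbb P}(\sqrt n)$. With Step 1 this yields $\sup_{t\in[\varepsilon n,n]}\|\widehat\theta_t-\theta_\star\|_2=O_{\mathbb P}(\varepsilon^{-1}n^{-1/2})$. Hence the mean parts of all indices agree up to $O_{\mathbb P}(n^{-1/2})$. The bonus is of order $\sqrt{\beta_t}\,\|a\|_{V^{-1}}\gtrsim\sqrt{\log n}/\sqrt n\to\infty\cdot n^{-1/2}$, and squared variances are $\asymp 1/t$. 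Therefore
\[
\|A_t\|^2_{V_{t-1}^{-1}}\ \ge\ (1-\epsilon_n)\max_a\|a\|^2_{V_{t-1}^{-1}}
\]
holds uniformly on $[\varepsilon n,n]$, with $\epsilon_n\to0$ in probability. This is the exact approximate Fedorov--Wynn step.

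\textbf{Step 3 (Wynn potential argument).} We have $\log\det V_{t}-\log\det V_{t-1}=\log(1+\|A_t\|^2_{V_{t-1}^{-1}})$ and $\max_a\|a\|^2_{V_{t-1}^{-1}}=(d+g(\pi_{t-1})+o(1))/t$. It follows that
\[
\Psi_t-\Psi_{t-1}\ \ge\ \frac{g(\pi_{t-1})-\epsilon_n'}{t}-O(t^{-2}),
\]
with $\epsilon_n'\to0$ in probability. In particular $\Psi$ is nearly nondecreasing: $\Psi_t\le\Psi_n+o_{\mathbb P}(1)$ for all $t\in[\varepsilon n,n]$. Suppose $\Phi(\pi_n)\le\Phi^\star-\eta$. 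Then $\Phi(\pi_t)\le\Phi^\star-\eta/2$ on the whole window, so $g(\pi_t)\ge\eta/2$ there, and summing increments gives
\[
\Psi_n-\Psi_{\varepsilon n}\ \ge\ \tfrac{\eta}{2}\log(1/\varepsilon)-o_{\mathbb P}(1).
\]
But $\Psi_n\le\Phi^\star+o(1)$, and $\Psi_{\varepsilon n}\ge d\log c$ by Step 1, where $c$ depends on $\varepsilon$ only through the burn-in and can be made uniform. Choosing $\varepsilon$ small makes this a contradiction with probability tending to one. Hence $\Phi(\pi_n^{(n)})\to\Phi^\star$ in probability.

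\textbf{Step 4 (conclusion).} $\Phi$ is continuous (upper semicontinuous with values in $[-\infty,\infty)$) on the compact set $\mathcal P(\mathcal A)$, which is a simplex because $\mathcal A$ is finite. Therefore, for every $\gamma>0$ there is $\eta>0$ such that $\Phi(\pi)>\Phi^\star-\eta$ implies $\operatorname{dist}_{W_1}(\pi,\mathcal P_D)<\gamma$. This gives $\operatorname{dist}_{W_1}(\pi_n^{(n)},\mathcal P_D)\xrightarrow{\mathbb P}0$. The unique-optimum case follows immediately, since $W_1$ metrizes weak convergence.
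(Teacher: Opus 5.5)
\textbf{The gap is in Step 1.} Steps 2--4 would go through if Step 1 were granted, but Step 1 is the crux and the mechanism you give for it does not work.

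The two facts you combine are optimism, $\mathrm{index}(a)\ge a^\top\theta_\star$, and $\mathrm{index}(A_t)\le A_t^\top\theta_\star+(C+1)\sqrt{\beta_t}\|A_t\|_{V_{t-1}^{-1}}$. Together they only give $(C+1)\sqrt{\beta_t}\|A_t\|_{V_{t-1}^{-1}}\ge -C_{\mathcal A}/\sqrt n$. That says nothing about $\|A_t\|_{V_{t-1}^{-1}}$ relative to $\max_a\|a\|_{V_{t-1}^{-1}}$.

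A constant-factor Wynn step needs the stronger bound $\mathrm{index}(a)\ge a^\top\theta_\star+(1-C)\sqrt{\beta_t}\|a\|_{V_{t-1}^{-1}}$ with $C<1$, which gives $c_0=(1-C)/(1+C)$. In other words, the confidence width must be a strict fraction of the bonus. The standard self-normalized width is $\sqrt{\lambda}C_{\theta_\star}+\sigma\sqrt{2\log(1/\delta)+d\log(1+tL^2/(d\lambda))}$. This has the same $\sqrt{\log t}$ order as $\sqrt{\beta_t}$, and its ratio to $\sqrt{\beta_t}$ tends to about $\sigma\sqrt{d/(d+2)}$, which exceeds $1$ when $\sigma$ is large. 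The theorem puts no restriction on $\sigma$, and the analysis only assumes $\beta_t=\Omega(\log t)$, so the standard bound does not even guarantee optimism.

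Consequently you cannot show that the bonus dominates the estimation error at times $o(n)$, and so you do not get linear growth of $V_t$. Steps 2 and 3 both rest on that linear growth:
\begin{itemize}
\item Step 2 needs it for the $O_{\mathbb P}(\varepsilon^{-1}n^{-1/2})$ estimation error.
\item Step 3 needs it for $\Psi_t\approx\Phi(\pi_t)$, for continuity of $g$, and for an $\varepsilon$-uniform lower bound on $\Psi_{\varepsilon n}$.
\end{itemize}

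\textbf{The missing idea is a sharper, LIL-scale uniform bound that exploits finiteness of $\mathcal A$.} Write $\sum_{s\le t}A_s\eta_s=\sum_{b\in\mathcal A}bQ_t(b)$, where $Q_t(b)$ is the noise accumulated on arm $b$. Ville's inequality with dyadic peeling over the counts $N_b(t)$ gives $|Q_t(b)|\le(2\sigma+1)\sqrt{N_b(t)\log\log n}$ for all $t\le n$ and $b\in\mathcal A$, with probability tending to one. Cauchy--Schwarz over arms then yields $|\langle\widehat\theta_t-\theta_\star,a\rangle|\le C_n\|a\|_{V_t^{-1}}$ with $C_n=O(\sqrt{|\mathcal A|\log\log n})$.

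This width is $o(\sqrt{\beta_t})$ once $t\ge r_n=\lceil e^{(\log\log n)^2}\rceil=o(n)$. Since $\max_a\|a\|_{V_{t-1}^{-1}}\gtrsim n^{-1/2}$ always, the $O(n^{-1/2})$ gap is absorbed as well. You therefore get a $(1-\varepsilon_n)$-approximate Wynn step on all of $[r_n,n]$ directly, with no prior linear-growth argument.

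From there, no conditioning of $V_t$ is needed. The paper uses the potential $y_s=\det(B_s)^{1/d}$ together with $\max_a a^\top B^{-1}a\ge\operatorname{tr}(B^{-1}M^\star)\ge d(\det M^\star/\det B)^{1/d}$, which holds for every $B\succ0$. This gives $\liminf\det(V_n/n)\ge D^\star$. The first $r_n$ rounds are absorbed into the initial matrix, since its trace divided by $n-r_n$ tends to zero.

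Your Step 3 could also be rescued this way. Once the sharp Wynn step on $[r_n,n]$ is available, a weak form of the same determinant argument supplies the linear growth you need.
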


\begin{remark}[Set-valued design limit]
The uniqueness of the D-optimal design ensures a deterministic limiting design. Without uniqueness, Theorem~\ref{thm:linucb-general-D-optimal} only implies convergence to the set $\mathcal P_D$, not to a particular element of that set.
\end{remark}

This theorem yields, in the special case of $K$-armed bandits, the same
uniform-allocation limit as that established for UCB1 in the small-gap
regimes studied by \citet{KalvitZeevi2021,KhamaruZhang2024InferenceUCB}.
When the $K$ arms are represented by the canonical basis vectors, the D-optimal design over this action set is the uniform allocation over the arms. Hence, the uniform sampling limit arises as a special case of Theorem~\ref{thm:linucb-general-D-optimal}. 

Theorem~\ref{thm:linucb-general-D-optimal} shows that the empirical sampling distribution of LinUCB converges to the set of D-optimal designs in the $W_1$-distance.  Thus, in the small-gap regime, LinUCB does not simply concentrate its sampling on the optimal actions. Instead, its sampling frequencies converge to an optimal design for learning the unknown reward parameter, in the sense of minimizing the volume of the associated confidence ellipsoid. By the equivalence theorem of~\citet{Kiefer_Wolfowitz_1960}, this limit also has a G-optimal interpretation: LinUCB asymptotically minimizes the largest variance of the estimated reward over the action space. Hence, in the small-gap regime, LinUCB does more than minimize regret; it implicitly constructs an information-efficient sampling  design, thereby linking regret-driven online learning with information-efficient experimental design.

This result also reveals a phenomenon complementary to the ``end-of-optimism'' fixed-gap counterexample in Section~8 of \citet{lattimore2017end}. In that example, the action set is $$\mathcal A=\{e_1,e_2,x\}\subset\mathbb R^2,$$
where $e_1=(1,0)$, $e_2=(0,1)$, and $x=(1-\epsilon,8\alpha\epsilon)$ for small $\epsilon>0$. The true reward parameter is $\theta_\star=e_1$, so $e_1$ is the unique optimal action, while the gaps satisfy $\Delta_{e_2}=1$ and $\Delta_x=\epsilon$. Thus, $x$ is nearly optimal and close to the optimal direction, whereas $e_2$ is clearly suboptimal but provides information in the direction orthogonal to $\theta_\star$. The fixed-gap example shows that LinUCB can be suboptimal by an arbitrarily large constant factor because it quickly rules out $e_2$ and stops sampling it, even though $e_2$ provides highly valuable information that an optimal algorithm should use it for exploration.  Theorem~\ref{thm:linucb-general-D-optimal} shows that in the (hard) small-gap regime, LinUCB can no longer rule out such actions quickly. Instead, the D-optimal design limit reveals an information-efficient exploration mechanism. This provides a new perspective on how LinUCB achieves the optimal regret rate in this hard small-gap regime.

This theorem also provides a mean-field characterization of LinUCB in the small-gap regime; see Table~\ref{tab:meanfield_linucb}. The empirical sampling distribution $\pi_n^{(n)}$ plays the role of a macroscopic state, while the D-optimality criterion $\Phi(\pi)$ acts as the potential function that determines its limiting behavior. Thus, when the reward differences are of order at most $n^{-1/2}$, the limiting sampling distribution is characterized by the maximizers of the potential function.

\subsection{Regret Analysis in the Small-Gap Regime}

We next use the optimal-design limit to shed light on the regret behavior of LinUCB in the small-gap regime.
Recall that $A_t^{(n)}$ is the action selected by LinUCB at time $t$ in the
$n$th problem instance. We define the {\em  cumulative pseudo-regret} (or simply {\em pseudo-regret}) by
\[
\mathcal R_n:=\sum_{t=1}^n
\left(\max_{b\in\mathcal A}\langle b,\theta_\star^{(n)}\rangle
-\langle A_t^{(n)},\theta_\star^{(n)}\rangle
\right).
\]
As a special case of the small-gap regime, suppose that
\begin{align}\label{theta_scale}
\theta_\star^{(n)}=\frac{\theta_0}{\sqrt n}.
\end{align}
For each $a\in\mathcal A$, define the scaled reward gap
\begin{align}\label{eq:delta}
\Delta(a):=
\max_{b\in\mathcal A}\langle b,\theta_0\rangle
-\langle a,\theta_0\rangle .
\end{align}
The following proposition characterizes the pseudo-regret of LinUCB in the small-gap regime. The proof is deferred to Appendix~\ref{appsub:proofs-coro}.
For $x\in\mathbb R$ and a nonempty set $S\subset\mathbb R$, we write
\[
\operatorname{dist}(x,S):=\inf_{y\in S}|x-y|,
\]
for the distance from the point $x$ to the set $S$.
\begin{proposition}[Pseudo-regret in the small-gap regime]
\label{prop:small-gap-regret-limit}
Suppose that the assumptions of Theorem~\ref{thm:linucb-general-D-optimal}
hold. Then the pseudo-regret of LinUCB satisfies 
\[
\operatorname{dist}
\left(
\frac{\mathcal R_n}{\sqrt n},
\left\{
\int_{\mathcal A} \Delta(a)\,\pi({\rm d}a):
\pi\in\mathcal P_D
\right\}
\right)
\xrightarrow{\mathbb P}0 .
\]
In particular, if the D-optimal design is unique, i.e.,
$\mathcal P_D=\{\pi^\star\}$, then
\[
\frac{\mathcal R_n}{\sqrt n}
\xrightarrow{\mathbb P}
\int_{\mathcal A} \Delta(a)\,\pi^\star({\rm d}a).
\]
\end{proposition}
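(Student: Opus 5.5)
The plan is to write the pseudo-regret as an integral of the scaled gap against the empirical sampling distribution, and then apply Theorem~\ref{thm:linucb-general-D-optimal} through a continuity argument. Under the scaling \eqref{theta_scale}, each round contributes
\[
\max_{b}\langle b,\theta_\star^{(n)}\rangle-\langle A_t^{(n)},\theta_\star^{(n)}\rangle=\Delta(A_t^{(n)})/\sqrt n .
\]
Summing over $t$ and recalling $\pi_n^{(n)}=\frac1n\sum_{t}\delta_{A_t^{(n)}}$ yields the exact identity
\[
\frac{\mathcal R_n}{\sqrt n}=\frac1n\sum_{t=1}^n\Delta(A_t^{(n)})=\int_{\mathcal A}\Delta(a)\,\pi_n^{(n)}({\rm d}a).
\]
Note also that $\theta_\star^{(n)}=\theta_0/\sqrt n$ satisfies Assumptions~\ref{ass:bounded-parameters} and~\ref{asm small gap}. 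Indeed, $\|\theta_\star^{(n)}\|_2\le\|\theta_0\|_2$, and the reward range is at most $2L\|\theta_0\|_2/\sqrt n$. Hence Theorem~\ref{thm:linucb-general-D-optimal} applies and gives $\operatorname{dist}_{W_1}(\pi_n^{(n)},\mathcal P_D)\xrightarrow{\mathbb P}0$.

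Next I would show that the linear functional $F(\pi):=\int\Delta\,{\rm d}\pi$ is Lipschitz in $W_1$. The function $\Delta(a)=\max_b\langle b,\theta_0\rangle-\langle a,\theta_0\rangle$ is affine in $a$ with Lipschitz constant $\|\theta_0\|_2$. The constant term cancels in differences, so the definition of $W_1$ gives
\[
|F(\mu)-F(\nu)|\le\|\theta_0\|_2\,W_1(\mu,\nu).
\]
If $\theta_0=0$ the claim is trivial, since $\mathcal R_n\equiv0$ and the target set is $\{0\}$; otherwise rescale $\Delta$ by $\|\theta_0\|_2$ to obtain a $1$-Lipschitz function.

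To conclude, fix $\varepsilon>0$. On the event $\{\operatorname{dist}_{W_1}(\pi_n^{(n)},\mathcal P_D)<\varepsilon\}$ there is some $\nu\in\mathcal P_D$ with $W_1(\pi_n^{(n)},\nu)<\varepsilon$. For that $\nu$,
\[
\operatorname{dist}\bigl(\mathcal R_n/\sqrt n,\,F(\mathcal P_D)\bigr)\le|F(\pi_n^{(n)})-F(\nu)|\le\|\theta_0\|_2\,\varepsilon .
\]
This gives the inclusion of events
\[
\bigl\{\operatorname{dist}(\mathcal R_n/\sqrt n,F(\mathcal P_D))>\|\theta_0\|_2\varepsilon\bigr\}\subseteq\bigl\{\operatorname{dist}_{W_1}(\pi_n^{(n)},\mathcal P_D)\ge\varepsilon\bigr\},
\]
and the probability of the right-hand event tends to $0$. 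In the unique case $\mathcal P_D=\{\pi^\star\}$, the target set is the single point $F(\pi^\star)$, which gives convergence in probability to $\int\Delta\,{\rm d}\pi^\star$.

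I do not expect a serious obstacle. The only points needing care are measurability of the events involving the infimum over $\mathcal P_D$, and nonemptiness of $\mathcal P_D$. Both are handled by finiteness of $\mathcal A$: $\mathcal P(\mathcal A)$ is a compact simplex, $\Phi$ is upper semicontinuous, and $\Phi$ is finite somewhere because $\operatorname{span}(\mathcal A)=\mathbb R^d$. In particular, $\operatorname{dist}_{W_1}(\cdot,\mathcal P_D)$ is a continuous function of the finitely many counts, so these events are measurable.
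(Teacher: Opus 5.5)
Your proposal is correct and is essentially the paper's own proof. The paper likewise writes $\mathcal R_n/\sqrt n=\int_{\mathcal A}\Delta\,{\rm d}\pi_n^{(n)}$, bounds the linear functional as $\|\theta_0\|$-Lipschitz in $W_1$, and then invokes Theorem~\ref{thm:linucb-general-D-optimal}. Your extra checks (that $\theta_0/\sqrt n$ satisfies the assumptions, the case $\theta_0=0$, and measurability) are harmless additions.
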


Proposition~\ref{prop:small-gap-regret-limit} provides a sharp, instance-specific asymptotic characterization of the pseudo-regret of LinUCB for various (finite) action sets in the small-gap regime. It shows that the $\sqrt{n}$-normalized pseudo-regret $\mathcal R_n/\sqrt n$ is determined by the average scaled reward gap under a D-optimal design. When the D-optimal design is unique, this identifies the coefficient of the  $\sqrt{n}$  term in the regret asymptotics. When it is not unique, the proposition instead provides an asymptotic envelope. Pinning down the exact constant would require identifying which D-optimal design is selected by the adaptive dynamics, a finer question that may require a fluid- or even diffusion-level characterization of the sampling dynamics~\citep{KalvitZeevi2021,Lattimore2026DiffusionPolicyGradient}. We leave this characterization as an open problem.

Proposition \ref{prop:small-gap-regret-limit} may be viewed   as an asymptotically sharp refinement of the classical
worst-case regret guarantee for LinUCB. Standard analyses show that LinUCB
achieves $\widetilde O(d\sqrt n)$ regret uniformly over bounded linear bandit
instances; see, e.g., \citet{abbasi2011improved}. Our analysis focuses on the
small-gap regime, where the reward gaps scale as $n^{-1/2}$, the canonical
scaling of the hardest instances underlying minimax regret lower bounds of order $\sqrt{n}$. Within this regime, Proposition~\ref{prop:small-gap-regret-limit} identifies the leading-order constant in the regret asymptotics.

When the D-optimal design is not unique, Proposition~\ref{prop:small-gap-regret-limit} may still completely characterize the coefficient of the leading-order $\sqrt{n}$ term. We illustrate this point with the Boolean hypercube action set through the following corollary.

\begin{corollary}[Pseudo-regret of LinUCB on the Boolean hypercube matches the known lower bound]
\label{coro:hypercube}
Consider the Boolean hypercube action set
\[
\mathcal A=\{0,1\}^d
\]
and suppose that
\[
\theta_{\star}^{(n)}
\in
\left\{-\frac{1}{\sqrt n},\frac{1}{\sqrt n}\right\}^d,
\]
so that $\theta_{\star}^{(n)}=\theta_0/\sqrt n$ for some
$\theta_0\in\{-1,1\}^d$. Then, for every fixed $d\geq 2$ and every sign pattern $\theta_0$, there exists an explicit constant $C_d(\theta_0)\in [d/4,3d/4]$ (defined in Eqn.~\eqref{eq:C_d}), depending only on $d$ and $\theta_0$, that characterizes the leading-order asymptotic pseudo-regret of LinUCB as $n\to\infty$:
\[
\frac{\mathcal R_n}{\sqrt n}
\xrightarrow{\mathbb P} C_d(\theta_0).
\]
 Consequently, the pseudo-regret is of order $\Theta(d\sqrt n)$. Both the
hypercube construction with $\theta_0\in\{-1,1\}^d$ and the resulting
fixed-dimensional asymptotic regret order are consistent with the lower
bound in Theorem~4.3 of~\citet{dani2007price}.
\end{corollary}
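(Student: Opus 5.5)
The plan is to deduce the corollary from Proposition~\ref{prop:small-gap-regret-limit}. Even though $\mathcal P_D$ is not a singleton for $\{0,1\}^d$, we expect the map $\pi\mapsto\int\Delta\,{\rm d}\pi$ to be constant on $\mathcal P_D$. Then the set $\{\int\Delta\,{\rm d}\pi:\pi\in\mathcal P_D\}$ is the single point $C_d(\theta_0)$, and the distance statement becomes convergence in probability.

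\textbf{Step 1: characterize $\mathcal P_D$ through its first two moments.} By the Kiefer--Wolfowitz equivalence theorem, $\pi$ is D-optimal iff $\max_{a\in\mathcal A}a^\top M(\pi)^{-1}a=d$, and all D-optimal designs share the same information matrix $M^\star$, since $\log\det$ is strictly concave. We first use symmetry. The hyperoctahedral-type group of coordinate permutations preserves $\{0,1\}^d$. The maps $a\mapsto \mathbf 1-a$ on selected coordinates do not act linearly; they act only affinely. However, coordinate permutations act linearly, and averaging over them yields a permutation-invariant optimal design. Hence $M^\star=\alpha I+\beta\mathbf 1\mathbf 1^\top$ for some $\alpha,\beta$. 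Note that $M^\star_{ii}=\pi(a_i=1)=\alpha+\beta$ and $M^\star_{ij}=\pi(a_i=a_j=1)=\beta$ for $i\ne j$. We then maximize $\log\det(\alpha I+\beta\mathbf 1\mathbf 1^\top)=(d-1)\log\alpha+\log(\alpha+d\beta)$ over the feasible pairs. The feasibility constraint comes from the law of $k=|a|$: if $p=\mathbb E[k]/d$ and $\mathbb E[k(k-1)]=d(d-1)\beta$, then $\alpha+\beta=p$. Optimizing over the distribution of $k$ gives explicit values $(p^\star,\beta^\star)$. Because $M^\star$ is unique, every $\pi\in\mathcal P_D$ has coordinate marginals $\pi(a_i=1)=M^\star_{ii}=:p_i^\star$. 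This does not require averaging, since $M^\star$ itself is unique and its diagonal is determined.

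\textbf{Step 2: the regret functional depends only on the diagonal of $M$.} For $a\in\{0,1\}^d$ we have $\langle a,\theta_0\rangle=\sum_i a_i\theta_{0,i}$, so $\Delta(a)=\sum_i(\mathbbm 1\{\theta_{0,i}=1\}-a_i\theta_{0,i})$. Integrating gives
\[
\int\Delta\,{\rm d}\pi=\sum_{i:\theta_{0,i}=1}(1-p_i)+\sum_{i:\theta_{0,i}=-1}p_i,\qquad p_i=\pi(a_i=1)=M(\pi)_{ii}.
\]
This is therefore constant on $\mathcal P_D$. Since $p_i^\star=p^\star$ for all $i$ by uniqueness and permutation invariance of $M^\star$, we obtain
\[
C_d(\theta_0)=m_+(1-p^\star)+m_-p^\star,
\]
where $m_\pm=\#\{i:\theta_{0,i}=\pm1\}$. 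Proposition~\ref{prop:small-gap-regret-limit} then yields $\mathcal R_n/\sqrt n\xrightarrow{\mathbb P}C_d(\theta_0)$.

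\textbf{Step 3: bound $p^\star$.} From the formula in Step 2, the bounds $C_d\in[d/4,3d/4]$ follow once we show $p^\star\in[1/4,3/4]$. This is the main computational obstacle: solving the moment optimization in Step 1 for $p^\star$. Our first attempt is to note that $\mathbf 1\mathbf 1^\top$-type structure forces $k$ to be concentrated near two values. We then compute $\det$ on the Schur complement and use the KW condition $a^\top M^{-1}a\le d$ at $a=\mathbf 1$ and at $|a|=1$ to sandwich $p^\star$. We expect $p^\star=\tfrac12+O(1/d)$; for example, for $d=2$ the D-optimal designs on $\{0,1\}^2$ put mass $1/3$ on each of $e_1,e_2,\mathbf 1$, giving $p^\star=2/3$. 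Finally, $\Theta(d\sqrt n)$ follows directly from the bounds on $C_d$.
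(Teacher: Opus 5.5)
Your Steps~1--2 are correct and use the same mechanism as the paper. Uniqueness of the D-optimal information matrix together with $a_i^2=a_i$ forces $\pi(a_i=1)=M^\star_{ii}$ for every $\pi\in\mathcal P_D$. So $\int\Delta\,{\rm d}\pi$ is constant on $\mathcal P_D$, and Proposition~\ref{prop:small-gap-regret-limit} gives $\mathcal R_n/\sqrt n\xrightarrow{\mathbb P} m_+(1-p^\star)+m_-p^\star$.

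The gap is Step~3, which carries all of the quantitative content of the corollary. You never determine $p^\star$, so the explicit constant, the bracket $C_d(\theta_0)\in[d/4,3d/4]$, and the $\Theta(d\sqrt n)$ order remain unproved. The sandwich you sketch also cannot work: the Kiefer--Wolfowitz inequalities at $a=\mathbf 1$ and at $|a|=1$ do not confine $p$ to $[1/4,3/4]$. For example, take $d=50$ and $\pi$ uniform on the vectors of weight $10$. Then $p=1/5$, $\alpha=8/49$, $\beta=9/245$ and $\alpha+d\beta=2$, so $\mathbf 1^\top M^{-1}\mathbf 1=25\le d$ and $e_i^\top M^{-1}e_i=481/80\le d$, yet $p<1/4$. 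The binding constraint sits at the vertex of the concave quadratic $k\mapsto \alpha^{-1}\bigl(k-\beta k^2/(\alpha+d\beta)\bigr)$, near $k\approx d/2$. In this example its value at $k=25$ is about $83>d$, and that is where the Kiefer--Wolfowitz condition has to be imposed.

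To close the gap, follow the paper. Take $M^\star=c_d(I_d+\mathbf 1\mathbf 1^\top)$ with $c_d=(d+1)/(4d)$ for odd $d$ and $c_d=(d+2)/(4(d+1))$ for even $d$. This matrix is realized as follows:
\begin{itemize}
\item for odd $d$, by the uniform design on vectors of weight $(d+1)/2$;
\item for even $d$, by the mixture putting mass $(d+2)/(2(d+1))$ on the uniform design over weight $d/2$ and mass $d/(2(d+1))$ on weight $d/2+1$.
\end{itemize}
For $|a|=k$ one computes $a^\top(M^\star)^{-1}a=k(d+1-k)/(c_d(d+1))$, whose maximum over integers $k$ equals $d$, so Kiefer--Wolfowitz certifies optimality. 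This gives $p^\star=2c_d\in(1/2,2/3]$ for $d\ge 2$; your $d=2$ value $2/3$ is the extreme case. Hence $C_d(\theta_0)=2c_d d+(1-4c_d)m_+$ with $|C_d(\theta_0)-d/2|\le 1/2$, which lies in $[d/4,3d/4]$ because $d\ge 2$.

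If you prefer to stay in your moment formulation, note that $\alpha=\mathbb E[k(d-k)]/(d(d-1))$ and $\alpha+d\beta=\mathbb E[k^2]/d$. The objective then becomes $(d-1)\log\mathbb E[k(d-k)]+\log\mathbb E[k^2]$ up to a constant, and its maximizing laws of $k$ are supported on the integer(s) nearest $(d+1)/2$.
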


\begin{proof}[Proof of Corollary~\ref{coro:hypercube}]
We first identify the D-optimal design on $\mathcal A$. Let
\begin{align}\label{eq:c_d}
c_d:=
\begin{cases}
\dfrac{d+1}{4d}, & d \text{ odd},\\[6pt]
\dfrac{d+2}{4(d+1)}, & d \text{ even}. 
\end{cases}
\end{align}
The D-optimal information matrix is
\begin{align}\label{eq:information matrix}
M^\star=c_d(I_d+\mathbf 1\mathbf 1^\top).
\end{align}
This D-optimal information matrix is attainable by a sampling distribution; details of its attainability are provided in Appendix~\ref{appsub:proofs-coro}.
 Since $\Phi(\cdot)$ is strictly concave in the information
matrix, the D-optimal information matrix is unique. Thus, for every
$\pi\in\mathcal P_D$,
\[
M(\pi)=M^\star.
\]
We now compute the regret limit constant. 
By~\eqref{eq:delta}, for every $\pi\in\mathcal P_D$,
\begin{align*}
\int_{\mathcal A}\Delta(a)\pi({\rm d}a)=\sum_{i=1}^d(\theta_{0,i})_+
-\int_{\mathcal A}\langle a,\theta_0\rangle\pi({\rm d}a)  =\sum_{i=1}^d(\theta_{0,i})_+
-\sum_{i=1}^d\theta_{0,i}\int_{\mathcal A}a_i\pi({\rm d}a),
\end{align*}
where $(x)_+$ denotes the positive part of $x$, i.e.,  $\max\{x,0\}$.
Using $a_i^2=a_i$ for $a\in\{0,1\}^d$, we obtain
\begin{align}\label{prob_aa}
\int_{\mathcal A} a_i\,\pi({\rm d}a)
=\int_{\mathcal A} a_i^2\,\pi({\rm d}a)
=(M^\star)_{ii}
=2c_d.
\end{align}
Then for every D-optimal design $\pi\in\mathcal P_D$,
\[
\int_{\mathcal A} \Delta(a) \pi({\rm d}a)
=\sum_{i=1}^d(\theta_{0,i})_+
-2c_d\sum_{i=1}^d\theta_{0,i}: =C_d(\theta_0).
\]
Equivalently,
\begin{equation}
C_d(\theta_0)=\frac12\|\theta_0\|_1+\frac{1-4c_d}{2}{\bf 1}^\top \theta_0.     \label{eqn:def_Cd}
\end{equation}
Applying Proposition~\ref{prop:small-gap-regret-limit}, 
\begin{align}\label{eq:regret_limit}
\frac{\mathcal R_n}{\sqrt n}
\xrightarrow{\mathbb P}
C_d(\theta_0),
\end{align}
which characterizes the coefficient of the leading-order $\sqrt{n}$ term in the pseudo-regret of LinUCB.

In particular, when $\theta_{\star}^{(n)}\in\{-1/{\sqrt{n}},1/{\sqrt{n}}\}^d$, we have $\theta_{0}\in\{-1,1\}^d$.  Let $S=\{i \in [d]:\theta_{0,i}>0\}$, then
\begin{align}\label{eq:C_d}
C_d(\theta_0)=
(1-2c_d)|S|+2c_d(d-|S|).
\end{align}

Moreover, consistent with the intuition that $c_d\approx 1/4+1/(4d)$ and $|S|\leq d$ regardless of $d$ and $S$, substituting the exact values of $c_d$ from~\eqref{eq:c_d} into the expression for $C_d(\theta_0)$ in~\eqref{eq:C_d} shows that, for every $d\geq 1$ and every sign pattern $\theta_0\in\{-1,1\}^d$,
\[
\left|C_d(\theta_0)-\frac{d}{2}\right|\leq \frac{1}{2}.
\]
Thus, for every fixed $d\geq 2$,
\begin{align}\label{eq:fixed}
\frac{d}{4}\leq C_d(\theta_0)\leq \frac{3d}{4}.
\end{align}
Hence, uniformly over all sign patterns, the leading-order regret
constant $C_d(\theta_0)$ is of order $d$.  By~\eqref{eq:regret_limit}, the resulting pseudo-regret
order $\Theta(d\sqrt n)$ is consistent, for every fixed $d\geq 2$, with
the lower bound in Theorem~4.3 of~\citet{dani2007price}.
\end{proof}

\begin{remark}[Interpretation of $C_d(\theta_0)$]
    The constant $C_d(\theta_0)$ in~\eqref{eq:C_d} admits a simple coordinatewise interpretation. Under any D-optimal design $\pi$, by~\eqref{prob_aa}, 
\[
\mathbb P_{\pi}(a_i=1)=2c_d.
\]
If $\theta_{0,i}>0$, the optimal action has $a_i=1$, so regret is incurred whenever the sampled action has $a_i=0$, an event with probability $1-2c_d$. Conversely, if $\theta_{0,i}<0$, the optimal action has $a_i=0$, so regret is incurred whenever the sampled action has $a_i=1$, an event with probability $2c_d$. Therefore, $C_d(\theta_0)$ is precisely the expected number of coordinates on which an action drawn from the D-optimal design differs from the optimal action. 
\end{remark}

\begin{remark}[Exact large-dimensional
description]The same calculation also yields the uniform large-dimensional
description
\[
C_d(\theta_0)=\frac{d}{2}+O(1)
\qquad\text{as }d\to\infty.
\]
This last relation describes the dependence of the fixed-$d$ limiting
constant on $d$; it should not be interpreted as a joint asymptotic
regret result with $d=d_n\to\infty$. Such a result would require a separate analysis under an explicitly specified joint scaling of $d_n$ and $n$ and is neither needed for, nor implied by, the fixed-dimensional statement~\eqref{eq:fixed}, which already matches the targeted known lower bound.
\end{remark}

 This behavior is in stark contrast to that in the  fixed-gap regime. When the parameter $\theta_\star$ is fixed, LinUCB eventually identifies the
coordinatewise optimal action $a^\star = \mathbf{1}\{\theta_0>0\}$ with high confidence and allocates nearly all
samples to it, so the empirical sampling distribution converges to the point
mass on the optimal action $\delta_{a^\star}$. By contrast, in the small-gap regime $\theta_\star^{(n)}=\theta_0/\sqrt n$, the reward gaps remain comparable to the statistical uncertainty. Consequently, LinUCB cannot confidently identify the coordinatewise optimal action and therefore continues to sample actions that differ from it in individual coordinates. The limiting D-optimal design thus spreads its sampling effort over many actions rather than concentrating on the optimum.


\subsection{Main Steps in the Proof of Theorem~\ref{thm:linucb-general-D-optimal}}
\label{subsec:main-steps-proof}

In this subsection, we outline the proof of Theorem~\ref{thm:linucb-general-D-optimal}. The main technical idea is a two-step reduction: we first show that, after a negligible initial period, the adaptive LinUCB selection rule behaves like approximate variance maximization; we then establish an approximate Fedorov--Wynn convergence result \citep{Wynn1970,Fedorov1972}, which identifies the optimal-design limit.

The complete argument is structured into four sequential steps.
First, we establish a uniform prediction-error bound that controls the reward estimation error uniformly over time and actions. Second, we use this bound to show that, after a negligible initial period, LinUCB behaves as an approximate variance maximization procedure. Third, we prove an approximate Fedorov--Wynn convergence result for triangular arrays, showing that approximate variance maximization drives the empirical sampling distribution toward the set of D-optimal designs. Finally, we verify that the initial period has asymptotically negligible mass, thereby obtaining the desired design limit. Comprehensive proofs for each step are deferred to Appendices~\ref{app:proof-concentration-bound}--\ref{app:late-stage-approx-fw}.

\subsubsection*{Step 1: Uniform prediction-error control}

The following lemma establishes a uniform prediction-error bound for the regularized least-squares estimator under the triangular-array model. The bound holds simultaneously over all rounds and all actions.

\begin{lemma}[Uniform prediction-error bound for finite action sets]
\label{lem:general-linear-concentration}
Suppose that Assumptions~\ref{ass:subgua_noise}-\ref{ass:bounded-parameters} hold. Under the LinUCB algorithm, there exists a sequence of events $\{\mathcal E_n\}_{n\ge 1}$ such that
\[
\lim_{n\rightarrow \infty} \mathbb P(\mathcal E_n)=1,
\]
and on $\mathcal E_n$, for all $1\le t\le n$ and all $a\in \mathcal{A}$,
\begin{align}\label{ieq:estimate_error}
\left|\left\langle\widehat\theta_t^{(n)}-\theta_{\star}^{(n)},a
\right\rangle\right|\le C_n\left\|a\right\|_{(V_t^{(n)})^{-1}},
\end{align}
where 
\begin{align}\label{def_cn}
C_n=\sqrt{\lambda}C_{\theta_\star} + (2\sigma+1)\sqrt{|\mathcal A|\log\log n}.
\end{align}
\end{lemma}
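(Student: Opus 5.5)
Our plan is to split the error into a bias term and a noise term, and to handle the noise uniformly by combining a self-normalized martingale bound with a union bound over the finite action set and a time discretization. This produces a $\log\log n$ rather than a $\log n$ radius.

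By \eqref{eq:observation} and \eqref{eq:least_square},
\[
\widehat\theta_t^{(n)}-\theta_\star^{(n)}=-\lambda (V_t^{(n)})^{-1}\theta_\star^{(n)}+(V_t^{(n)})^{-1}S_t^{(n)},\qquad S_t^{(n)}:=\sum_{s=1}^t A_s^{(n)}\eta_s^{(n)}.
\]
For the bias term, Cauchy--Schwarz in the $(V_t^{(n)})^{-1}$ inner product together with $V_t^{(n)}\succeq\lambda I_d$ gives
\[
\lambda|\langle a,(V_t^{(n)})^{-1}\theta_\star^{(n)}\rangle|\le \lambda\|a\|_{(V_t^{(n)})^{-1}}\|\theta_\star^{(n)}\|_{(V_t^{(n)})^{-1}}\le\sqrt\lambda C_{\theta_\star}\|a\|_{(V_t^{(n)})^{-1}}.
\]
This accounts for the first summand of $C_n$ in \eqref{def_cn}.

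The noise term is the main work. Since $\mathcal A$ is finite, write $V_t^{(n)}=\lambda I_d+\sum_{b\in\mathcal A}N_b(t)\,bb^\top$, where $N_b(t)$ is the number of times $b$ was played up to $t$. Then $S_t^{(n)}=\sum_b b\,Z_b(t)$, with $Z_b(t)=\sum_{s\le t,\,A_s=b}\eta_s^{(n)}$. Each $Z_b$, indexed by its own play count, is a $\sigma$-subgaussian martingale (optional skipping / a time change). We aim to show that with probability tending to one, for all $b$ and all $t$,
\[
|Z_b(t)|\le (2\sigma+1)\sqrt{\log\log n}\,\sqrt{\lambda+N_b(t)}.
\]
For this we use a law-of-the-iterated-logarithm-type maximal inequality. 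Peel the play count into geometric blocks $N\in[2^k,2^{k+1})$, $k\le\log_2 n$, and apply Doob's maximal inequality to the exponential supermartingale with $\tau$ tuned to each block. Each block then fails with probability at most $\exp(-c\log\log n)$ for a suitable constant. There are $O(\log n)$ blocks and $|\mathcal A|$ arms, so the threshold constant $(2\sigma+1)$ leaves enough room, with the extra $+1$ absorbing the peeling loss, for the total failure probability to vanish. We expect this to be the main obstacle: the constants must be tuned so that $\log n\cdot(\log n)^{-c}\to0$. If the constants are tight, we would first try a finer block ratio $(1+\epsilon)$ or a slightly larger exponent. $\mathcal E_n$ is defined as this good event.

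On $\mathcal E_n$, set $D=\mathrm{diag}(\lambda+N_b(t))$ and let $B$ be the $d\times|\mathcal A|$ matrix with columns $b$. We have $V_t^{(n)}\succeq BDB^\top$ when $\lambda I_d\succeq\lambda BB^\top$ after rescaling; otherwise we simply use $V_t^{(n)}\succeq\sum_b N_b bb^\top$ and treat the $\lambda$ part separately. Cauchy--Schwarz then gives
\[
|\langle a,(V_t^{(n)})^{-1}S_t\rangle|\le \sum_b|\langle a,(V_t^{(n)})^{-1}b\rangle|\,|Z_b|\le \Big(\sum_b (\lambda+N_b)\langle a,(V_t^{(n)})^{-1}b\rangle^2\Big)^{1/2}\Big(\sum_b \frac{Z_b^2}{\lambda+N_b}\Big)^{1/2}.
\]
The second factor is at most $(2\sigma+1)\sqrt{|\mathcal A|\log\log n}$, which explains the $\sqrt{|\mathcal A|}$ in $C_n$. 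The first factor equals $\|a\|_{(V_t^{(n)})^{-1}\widetilde V(V_t^{(n)})^{-1}}$ with $\widetilde V=\sum_b(\lambda+N_b)bb^\top$. Provided $\widetilde V\preceq V_t^{(n)}$, this is at most $\|a\|_{(V_t^{(n)})^{-1}}$. If the $\lambda$ bookkeeping fails, with $\widetilde V\preceq V_t^{(n)}+\lambda\sum_b bb^\top$, we would instead apply the concentration to $Z_b/\sqrt{\max(1,N_b)}$ and use only $N_b\,bb^\top\preceq V_t^{(n)}$, absorbing the slack into the constant. Adding the bias and noise bounds yields \eqref{ieq:estimate_error}.
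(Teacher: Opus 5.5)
Your plan is essentially the paper's proof: the same bias/noise split, per-action noise sums controlled by dyadic peeling over the play counts $N_b(t)$, and Cauchy--Schwarz over $\mathcal A$ producing the $\sqrt{|\mathcal A|}$. The paper works in real time with the supermartingale $\exp\{uQ_t(b)-u^2\sigma^2N_b(t)/2\}$ and Ville's inequality, which avoids the optional-skipping step.

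Drop the $\lambda+N_b$ normalization and commit to your fallback. The first version fails in general because $\lambda\sum_b bb^\top\not\preceq\lambda I_d$. Normalizing by $N_b(t)$ and summing only over $b$ with $N_b(t)\ge1$ (the others have zero accumulated noise) gives $\sum_b N_b(t)\,bb^\top=V_t^{(n)}-\lambda I_d\preceq V_t^{(n)}$ with no slack, so the constant in $C_n$ comes out exactly. Ratio-$2$ blocks also already suffice: each block fails with probability at most $2(\log n)^{-(2\sigma+1)^2/(4\sigma^2)}$, and this exponent exceeds $1$.
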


Lemma~\ref{lem:general-linear-concentration} provides uniform control over the prediction error scaled by the LinUCB confidence radius. Because $C_n = \Theta(\sqrt{\log\log n})$, this concentration bound is sufficiently sharp to allow a precise comparison between the estimated reward and exploration components of the LinUCB index.

\subsubsection*{Step 2: Reduction to approximate variance maximization}
In the small-gap regime, the true reward differences across actions are of order at most $n^{-1/2}$. The uniform prediction-error control from Lemma~\ref{lem:general-linear-concentration} further ensures that the reward estimation error fluctuates at a strictly lower order than the exploration bonus. Consequently, after a negligible initial period, the LinUCB selection rule is governed primarily by the variance-based confidence radius. The following proposition formalizes this reduction.

\begin{proposition}[LinUCB as approximate variance maximization]
\label{prop:linucb-approx-fw}
Suppose that Assumptions~\ref{ass:subgua_noise}--\ref{asm small gap} hold. Let $\{\mathcal{E}_n\}_{n\ge 1}$ be the sequence of events defined in Lemma~\ref{lem:general-linear-concentration}. Then there exist a deterministic sequence $\varepsilon_n \to 0$ and an integer $n_0 \ge 1$ such that, for all $n \ge n_0$, on the event $\mathcal{E}_n$, the action selection rule induced by LinUCB satisfies the following inequality uniformly for all $r_n \le t \le n$:
\begin{equation}
\label{approximate_maximizer}
\left(A_t^{(n)}\right)^\top \left(V_{t-1}^{(n)}\right)^{-1} A_t^{(n)} \ge (1-\varepsilon_n) \max_{b \in \mathcal{A}} b^\top \left(V_{t-1}^{(n)}\right)^{-1} b,
\end{equation}
where the initialization window threshold is given by
\begin{equation}
\label{def_rn}
r_n := \left\lceil \exp\left\{(\log\log n)^2\right\} \right\rceil.
\end{equation}
\end{proposition}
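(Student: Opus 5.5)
Write $\|a\|_t:=\|a\|_{(V_{t-1}^{(n)})^{-1}}$ and $\kappa_t:=\max_{b\in\mathcal A}\|b\|_t$. The plan is to compare the LinUCB index of the selected action $A_t^{(n)}$ with that of a variance-maximizing action $b^\star\in\argmax_b\|b\|_t$. The estimated-reward part of the index is controlled by the true reward gap plus the estimation error. The true gap is small by Assumption~\ref{asm small gap}. The estimation error is bounded on $\mathcal E_n$ by Lemma~\ref{lem:general-linear-concentration}.

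First, write $\widehat\theta_{t-1}=\theta_\star+(\widehat\theta_{t-1}-\theta_\star)$ and apply Lemma~\ref{lem:general-linear-concentration} at time $t-1$, which holds on $\mathcal E_n$ for all rounds. Since $A_t^{(n)}$ maximizes the index, we get
\[
\langle A_t,\theta_\star\rangle+C_n\|A_t\|_t+\sqrt{\beta_t}\|A_t\|_t\ \ge\ \langle b^\star,\theta_\star\rangle-C_n\kappa_t+\sqrt{\beta_t}\kappa_t .
\]
By Assumption~\ref{asm small gap}, the difference $\langle b^\star-A_t,\theta_\star\rangle$ is at most $C_{\mathcal A}/\sqrt n$. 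Rearranging gives
\[
(\sqrt{\beta_t}+C_n)\|A_t\|_t\ \ge\ (\sqrt{\beta_t}-C_n)\kappa_t-\frac{C_{\mathcal A}}{\sqrt n}.
\]

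Next, I need a lower bound on $\kappa_t$ so that $C_{\mathcal A}/\sqrt n$ is a vanishing fraction of $\sqrt{\beta_t}\kappa_t$. Since $\operatorname{span}(\mathcal A)=\mathbb R^d$ and $V_{t-1}\preceq(\lambda+tL^2)I_d$, pick any unit vector $u$. Because $\mathcal A$ is finite and spanning, some $b$ has $|\langle b,u\rangle|\ge c_{\mathcal A}>0$, where $c_{\mathcal A}$ depends only on $\mathcal A$. Then $\|b\|_t^2\ge \|b\|_2^2/\lambda_{\max}(V_{t-1})$, and $\|b\|_2\ge c_{\mathcal A}$, so
\[
\kappa_t\ \ge\ \frac{c_{\mathcal A}}{\sqrt{\lambda+nL^2}}\ \gtrsim\ n^{-1/2}.
\]
Hence $C_{\mathcal A}/\sqrt n\le C'\kappa_t$ for a constant $C'$ that does not depend on $n$ or $t$. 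Dividing through, we obtain
\[
\frac{\|A_t\|_t}{\kappa_t}\ \ge\ \frac{\sqrt{\beta_t}-C_n-C'}{\sqrt{\beta_t}+C_n}.
\]
The desired squared inequality follows by squaring this ratio, so we may set
\[
\varepsilon_n:=1-\left(\frac{\sqrt{\beta_{r_n}}-C_n-C'}{\sqrt{\beta_{r_n}}+C_n}\right)_+^2 .
\]
This choice uses that $\sqrt{\beta_t}$ is nondecreasing and that the ratio is increasing in $\sqrt{\beta_t}$, which makes the bound uniform over $t\ge r_n$.

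Finally, check that $\varepsilon_n\to0$. Because $\beta_t=\Omega(\log t)$,
\[
\sqrt{\beta_{r_n}}\ \gtrsim\ \sqrt{\log r_n}\ \approx\ \log\log n .
\]
Meanwhile $C_n=\Theta(\sqrt{\log\log n})$, so $(C_n+C')/\sqrt{\beta_{r_n}}\to0$ and therefore $\varepsilon_n\to0$. We take $n_0$ large enough that $r_n\le n$ and that the ratio above is positive. The main obstacle is the uniform lower bound $\kappa_t\gtrsim n^{-1/2}$: it is what converts the additive gap error into a multiplicative one, and it relies on the spanning assumption together with $t\le n$. A secondary subtlety is the direction of monotonicity needed for uniformity in $t$, which is handled by the monotonicity of $\beta_t$ noted above.
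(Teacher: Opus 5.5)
Your proof is correct and follows the paper's argument. You compare the LinUCB index of the selected action with that of a variance maximizer, bound the estimation error on $\mathcal E_n$ via Lemma~\ref{lem:general-linear-concentration}, turn the $C_{\mathcal A}/\sqrt n$ gap into a multiplicative error using $\max_b\|b\|_{(V_{t-1}^{(n)})^{-1}}\gtrsim n^{-1/2}$, and use $\sqrt{\beta_{r_n}}\ge\log\log n\gg C_n$. The only differences are cosmetic: the paper argues by contradiction with $\gamma_n=(6C_n+4R_{\mathcal A})/\sqrt{\beta_{r_n}}$ where you rearrange directly to an explicit ratio, and your detour through a unit vector $u$ is unnecessary because only $\max_{b}\|b\|_2>0$ is needed.
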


Proposition~\ref{prop:linucb-approx-fw} formalizes the intuition that after a negligible initial period (noting that $r_n = o(n)$), the exploration-exploitation trade-off of LinUCB simplifies. Specifically, the action selected by the algorithm at each round asymptotically tracks an approximate maximizer of the variance criterion $a^\top (V_{t-1}^{(n)})^{-1}a$. Consequently, during the later stages of the interaction (i.e., when $t\approx n$), LinUCB behaves like an approximate Fedorov--Wynn algorithm.

\subsubsection*{Step 3: Approximate Fedorov--Wynn convergence}

We next establish an approximate version of the Fedorov--Wynn convergence theorem tailored to triangular arrays. Specifically, we demonstrate that any action sequence asymptotically tracking a variance-maximization rule must yield an empirical distribution that converges to the set of D-optimal designs.

\begin{proposition}[Approximate Fedorov--Wynn convergence for triangular arrays]
\label{prop:fw-triangular-array-robustness}
Suppose that Assumption~\ref{ass:bounded-action-set} holds and that $\operatorname{span}(\mathcal{A}) = \mathbb{R}^d$. Let $\{m_n\}_{n\ge 1}$ be a sequence of integers satisfying $m_n \to \infty$ as $n \to \infty$. For each $n \ge 1$, let $Z_1^{(n)}, \ldots, Z_{m_n}^{(n)} \in \mathcal{A}$, and let the initial matrix $B_0^{(n)} \succ 0$ satisfy
\[
B_0^{(n)} \succeq \lambda I_d \qquad \text{and} \qquad \lim_{n \to \infty} \frac{\operatorname{tr}\big(B_0^{(n)}\big)}{m_n} = 0
\]
for some fixed constant $\lambda > 0$. For each $n \ge 1$, define the sequence of matrices
\[
B_s^{(n)} := B_0^{(n)} + \sum_{u=1}^s Z_u^{(n)}\bigl(Z_u^{(n)}\bigr)^\top, \qquad s = 0, \ldots, m_n.
\]
Suppose there exists a deterministic sequence $\delta_n \to 0$ such that, for all $s = 1, \ldots, m_n$, the following inequality holds:
\begin{equation}
\label{con_appro_large}
\bigl(Z_s^{(n)}\bigr)^\top\bigl(B_{s-1}^{(n)}\bigr)^{-1} Z_s^{(n)} \ge (1-\delta_n) \max_{a \in \mathcal{A}} a^\top\bigl(B_{s-1}^{(n)}\bigr)^{-1} a.
\end{equation}
Let $\bar{\pi}_{m_n}^{(n)} := \frac{1}{m_n} \sum_{s=1}^{m_n} \delta_{Z_s^{(n)}} \in \mathcal{P}(\mathcal{A})$ denote the empirical distribution of the sequence. Then
\[
\lim_{n\to \infty}\operatorname{dist}_{W_1}\bigl(\bar\pi_{m_n}^{(n)},\mathcal P_D\bigr)=0.
\]
\end{proposition}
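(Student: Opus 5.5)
The plan is to run the classical Fedorov--Wynn potential argument on the normalized criterion $\Psi_s := \log\det(B_s^{(n)}/s)$, keeping track of the two perturbations: the approximate maximization in \eqref{con_appro_large} and the initial matrix $B_0^{(n)}$. Write $B_s = B_s^{(n)}$ and $Z_s = Z_s^{(n)}$, and define
\[
g_s := \max_{a\in\mathcal A} a^\top B_{s-1}^{-1}a .
\]
By the matrix determinant lemma,
\[
\log\det B_s - \log\det B_{s-1} = \log\bigl(1+Z_s^\top B_{s-1}^{-1}Z_s\bigr) \ge \log\bigl(1+(1-\delta_n)g_s\bigr).
\]
The Kiefer--Wolfowitz equivalence theorem quantifies how far a design is from optimal. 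For any design $\xi$ with $M(\xi)\succ 0$, we have
\[
\Phi^\star - \Phi(\xi) \le \max_a a^\top M(\xi)^{-1}a - d,
\]
where $\Phi^\star := \max\Phi$. This follows from concavity of $\log\det$: the directional derivative toward $\delta_a$ equals $a^\top M^{-1}a - d$, and the concave function lies below its linearization. We apply this to the normalized matrix $\tilde M_{s-1} := B_{s-1}/(s-1)$. It is not exactly an information matrix, since it equals $M(\bar\pi_{s-1})$ plus $B_0/(s-1)$. However, the extra term is of order $\operatorname{tr}(B_0)/s$, which is $o(1)$ once $s \ge \epsilon m_n$, by the assumption $\operatorname{tr}(B_0^{(n)})/m_n \to 0$. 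Since $(s-1)g_s = \max_a a^\top \tilde M_{s-1}^{-1}a$, the equivalence-theorem gap says that $(s-1)g_s \ge d + \mathrm{gap}_{s-1} - o(1)$.

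\textbf{Step 1: a recursion for the gap.} Expanding $\log(1+x)\approx x$ for $g_s = O(1/s)$, the increment of $\Psi_s = \log\det B_s - d\log s$ satisfies
\[
\Psi_s - \Psi_{s-1} \ge \frac{(1-\delta_n)\,(d+\mathrm{gap}_{s-1}) - d}{s} - O\!\left(\frac{1}{s^2}\right).
\]
Here $\mathrm{gap}_s := \Phi^\star - \Psi_s$, up to the $o(1)$ perturbation from $B_0$. The increment therefore becomes
\[
\frac{\mathrm{gap}_{s-1} - \delta_n(d+\mathrm{gap}_{s-1})}{s} - O(s^{-2}).
\]
We also need $g_s = O(1/s)$ and a lower bound on $\Psi$ (so that $\mathrm{gap}$ is bounded) for $s\ge\epsilon m_n$. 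I would obtain both from a crude early phase. Because $\operatorname{span}\mathcal A = \mathbb R^d$ and the rule is approximately greedy in variance, $\lambda_{\min}(B_s)$ must grow linearly. If some direction stayed poorly explored, the variance there would be large, and the greedy choice would pick it. This is the step I expect to be the main obstacle. The cleanest route is probably a potential argument: bound $\sum_s \log(1+g_s) \le \log\det B_{m}-\log\det B_0 \le d\log(\operatorname{tr}B_m/(d\lambda))$, which forces $g_s$ to be small on most $s$.

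\textbf{Step 2: convergence to the optimum.} The recursion $\mathrm{gap}_s \le \mathrm{gap}_{s-1}\bigl(1-\tfrac{1-\delta_n}{s}\bigr) + \tfrac{d\delta_n + o(1)}{s} + O(s^{-2})$ has the ODE form $\dot{\mathrm{gap}} = -\mathrm{gap}/s + \text{small}/s$. In log-time $u = \log s$, the gap contracts exponentially toward a level $O(\delta_n + o(1))$. Running the recursion from $s = \epsilon m_n$ to $m_n$ only gives log-time $\log(1/\epsilon)$, so I would instead start from $s = m_n^{1/2}$, or from any $s_n\to\infty$ with $s_n = o(m_n)$ where Step 1's bounds already hold, and absorb the $B_0$ term uniformly. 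This yields $\mathrm{gap}_{m_n} \to 0$, that is, $\Phi(\bar\pi_{m_n}^{(n)}) \to \Phi^\star$, after removing $B_0/m_n \to 0$ by continuity of $\log\det$ on matrices bounded away from singular.

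\textbf{Step 3: from values to the set of designs.} We have $\Phi(\bar\pi_{m_n}^{(n)}) \to \Phi^\star$. The space $\mathcal P(\mathcal A)$ is compact (a finite simplex), and $\Phi$ is upper semicontinuous, in fact continuous on $\{\Phi > -\infty\}$. Suppose the distance to $\mathcal P_D$ did not tend to $0$. Then a subsequence would stay at least $\eta$ away from $\mathcal P_D$, and a further sub-subsequence would converge to some $\pi$ with $\Phi(\pi) \ge \Phi^\star$, i.e. $\pi \in \mathcal P_D$, a contradiction. Since $W_1$ metrizes weak convergence on $\mathcal P(\mathcal A)$, this gives $\operatorname{dist}_{W_1}(\bar\pi_{m_n}^{(n)},\mathcal P_D)\to 0$.
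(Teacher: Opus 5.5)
Your overall architecture is viable: a Wynn-type gap recursion driven by the Kiefer--Wolfowitz inequality, followed by compactness and upper semicontinuity of $\Phi$. Step~3 is fine. But the argument as written has a hole exactly at the step you flag, and it handles $B_0^{(n)}$ inconsistently.

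\textbf{The gap.} You never establish the a priori control (uniform $g_s=O(1/s)$, bounded gap) that your $O(s^{-2})$ remainder requires. The potential bound $\sum_{s}\log(1+(1-\delta_n)g_s)\le\log\det B_m-\log\det B_0$ only says $g_s$ is small for most $s$, not uniformly. Linear growth of $\lambda_{\min}(B_s)$ is essentially a consequence of the proposition, not something you can cheaply prove first. Separately, you make the error $B_0/(s-1)$ negligible by restricting to $s\ge\epsilon m_n$. Step~2 then starts the recursion at $s_n=o(m_n)$, e.g.\ $\sqrt{m_n}$, where $\operatorname{tr}(B_0^{(n)})/s_n$ need not vanish (take $\operatorname{tr}(B_0^{(n)})\asymp m_n/\log m_n$). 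So ``absorb the $B_0$ term uniformly'' is not justified. You would also need a bound on $\mathrm{gap}_{s_n}$ at the new starting time, which Step~1 was supposed to supply.

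\textbf{How to close it.} Both problems disappear.
First, the inequality $\Phi^\star-\log\det\tilde M\le\max_a a^\top\tilde M^{-1}a-d$ holds for every $\tilde M\succ0$, not only for information matrices. Concavity gives $\log\det M^\star\le\log\det\tilde M+\operatorname{tr}(\tilde M^{-1}M^\star)-d$, and $\operatorname{tr}(\tilde M^{-1}M^\star)=\int a^\top\tilde M^{-1}a\,\pi^\star({\rm d}a)\le\max_a a^\top\tilde M^{-1}a$. So with $\mathrm{gap}_s:=\Phi^\star-\log\det(B_s/s)$ you get $(s-1)g_s\ge d+\mathrm{gap}_{s-1}$ exactly, with no $B_0$ error, and you can run the recursion from $s=1$.

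Second, you need no upper bound on $g_s$. Monotonicity of $\log(1+\cdot)$ lets you plug in the lower bound. The Taylor remainder at $x=(1-\delta_n)(d+\mathrm{gap}_{s-1})_+/(s-1)$ then needs only an upper bound on the gap, which $B_s\succeq\lambda I_d$ gives for free: $\mathrm{gap}_s\le\Phi^\star-d\log\lambda+d\log s$. The remainder is therefore $O(\log^2 s/s^2)$, and its contraction-weighted sum is $O(m_n^{-1+\delta_n}\log^3 m_n)\to0$. When $d+\mathrm{gap}_{s-1}\le 0$, the trivial bound $\mathrm{gap}_s\le\mathrm{gap}_{s-1}+d/(s-1)$ already implies the linear recursion, so the comparison is valid throughout. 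Alternatively, $g_s$ is nonincreasing because $B_s\succeq B_{s-1}$, so your own potential bound actually yields $g_m=O(\log m/m)$. The trace condition on $B_0^{(n)}$ then enters only at the end.

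\textbf{Comparison with the paper.} The paper avoids all this bookkeeping. It uses the AM--GM form $\max_a a^\top B_{s-1}^{-1}a\ge\operatorname{tr}(B_{s-1}^{-1}M^\star)\ge d(\det M^\star/\det B_{s-1})^{1/d}$. This is the same first-order inequality in sharper form, since $(s-1)g_s\ge d\,e^{\mathrm{gap}_{s-1}/d}\ge d+\mathrm{gap}_{s-1}$. It then tracks $y_s=\det(B_s)^{1/d}$, for which the recursion is additive: $y_s\ge y_{s-1}+G_{\alpha_n}(y_{s-1})$ with $G_{\alpha}\ge v>0$ and $G_{\alpha}(y)\ge\alpha-C/y$. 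This gives $y_{m_n}\ge(1-\delta_n)c^\star m_n-O(\log m_n)$ directly. There is no contraction analysis and no choice of starting time, and $B_0^{(n)}$ appears only in the final identity $B_{m_n}^{(n)}/m_n=M(\bar\pi_{m_n}^{(n)})+B_0^{(n)}/m_n$.
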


Proposition~\ref{prop:fw-triangular-array-robustness} extends the classical Fedorov--Wynn convergence mechanism to an approximate triangular-array framework. It guarantees that the D-optimal design limit is fully preserved under approximate variance maximization, provided that the relative trace contribution of the initialization matrix $B_0^{(n)}$ vanishes asymptotically. Given the prevalence of tracking procedures in sequential analysis, this robustness result may be of independent mathematical interest.

\subsubsection*{Step 4: Removing the negligible initial period}

The following result shows that the approximate Fedorov--Wynn argument remains valid when the approximate variance-maximization condition holds only after a vanishing initial period.

\begin{lemma}[Late-stage approximate Fedorov--Wynn convergence]
\label{lem:late-stage-approx-fw}
Suppose that Assumption~\ref{ass:bounded-action-set} holds and that $\operatorname{span}(\mathcal{A}) = \mathbb{R}^d$. For each $n \ge 1$, consider a sequence of actions $A_1^{(n)}, \ldots, A_n^{(n)} \in \mathcal{A}$. Define the regularized design matrices
\[
B_t^{(n)} := \lambda I_d + \sum_{u=1}^t A_u^{(n)}\bigl(A_u^{(n)}\bigr)^\top, \qquad 0 \le t \le n
\]
for some fixed constant $\lambda > 0$, and the empirical action distributions
\[
\pi_t^{(n)} := \frac{1}{t} \sum_{u=1}^t \delta_{A_u^{(n)}} \in \mathcal{P}(\mathcal{A}), \qquad 1 \le t \le n.
\]
Suppose there exist deterministic sequences $\{r_n\}_{n\ge 1}$ and $\{\varepsilon_n\}_{n\ge 1}$ satisfying
\[
\lim_{n \to \infty} r_n = \infty, \qquad \lim_{n \to \infty} \frac{r_n}{n} = 0, \qquad \lim_{n \to \infty} \varepsilon_n = 0,
\]
and an integer $n_0 \ge 1$ such that, for all $n \ge n_0$ and all $r_n \le t \le n$, the following approximate variance-maximization condition holds:
\begin{equation}
\label{con:approximate-large-infinite}
\bigl(A_t^{(n)}\bigr)^\top\bigl(B_{t-1}^{(n)}\bigr)^{-1} A_t^{(n)} \ge (1-\varepsilon_n) \max_{a \in \mathcal{A}} a^\top \bigl(B_{t-1}^{(n)}\bigr)^{-1} a.
\end{equation}
Then,
\[
\lim_{n \to \infty} \operatorname{dist}_{W_1}\bigl(\pi_n^{(n)}, \mathcal{P}_D\bigr) = 0.
\]
\end{lemma}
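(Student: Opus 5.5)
The plan is to reduce Lemma~\ref{lem:late-stage-approx-fw} to Proposition~\ref{prop:fw-triangular-array-robustness}. We do this by treating the initial segment as part of the initial matrix and then discarding its empirical mass.

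\textbf{Reindexing.} For $n \ge n_0$, set $m_n := n - r_n + 1$ and $Z_s^{(n)} := A_{r_n+s-1}^{(n)}$ for $s=1,\dots,m_n$. The initial matrix is
\[
B_0'^{(n)} := B_{r_n-1}^{(n)} = \lambda I_d + \sum_{u=1}^{r_n-1} A_u^{(n)}\bigl(A_u^{(n)}\bigr)^\top .
\]
Then $B_s'^{(n)} = B_{r_n-1+s}^{(n)}$. Condition~\eqref{con:approximate-large-infinite} at time $t=r_n+s-1$ is exactly condition~\eqref{con_appro_large} with $\delta_n=\varepsilon_n$. For $n<n_0$ the sequence can be defined arbitrarily, since only limits matter; for instance, take $\delta_n=1$, which makes~\eqref{con_appro_large} trivial.

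\textbf{Checking the hypotheses of Proposition~\ref{prop:fw-triangular-array-robustness}.} We have $m_n\to\infty$ because $r_n/n\to0$. Clearly $B_0'^{(n)} \succeq \lambda I_d \succ 0$. For the trace,
\[
\operatorname{tr}\bigl(B_0'^{(n)}\bigr) \le d\lambda + (r_n-1)L^2
\]
by Assumption~\ref{ass:bounded-action-set}, so $\operatorname{tr}(B_0'^{(n)})/m_n \le (d\lambda + r_nL^2)/(n-r_n+1) \to 0$. The proposition then gives $\operatorname{dist}_{W_1}(\bar\pi_{m_n}^{(n)},\mathcal P_D)\to 0$.

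\textbf{Comparing $\pi_n^{(n)}$ with $\bar\pi_{m_n}^{(n)}$.} Write
\[
\pi_n^{(n)} = \frac{r_n-1}{n}\,\pi_{r_n-1}^{(n)} + \frac{m_n}{n}\,\bar\pi_{m_n}^{(n)}.
\]
For a 1-Lipschitz $f$ on $\mathcal A$, we may subtract a constant and assume $f(a_0)=0$ for a fixed $a_0\in\mathcal A$; then $|f|\le 2L$ on $\mathcal A$. Hence
\[
W_1\bigl(\pi_n^{(n)},\bar\pi_{m_n}^{(n)}\bigr) \le \frac{r_n-1}{n}\,\Bigl|\int f\,d\pi_{r_n-1}^{(n)} - \int f\,d\bar\pi_{m_n}^{(n)}\Bigr| \le \frac{4L(r_n-1)}{n} \to 0 .
\]
More simply, $W_1(\mu,\nu)\le \operatorname{diam}(\mathcal A)\,\|\mu-\nu\|_{TV}$ gives the same bound. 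By the triangle inequality for $\operatorname{dist}_{W_1}$ (distance to a set is 1-Lipschitz in $W_1$), $\operatorname{dist}_{W_1}(\pi_n^{(n)},\mathcal P_D) \le W_1(\pi_n^{(n)},\bar\pi_{m_n}^{(n)}) + \operatorname{dist}_{W_1}(\bar\pi_{m_n}^{(n)},\mathcal P_D) \to 0$.

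\textbf{Main obstacle.} The substantive work is inside Proposition~\ref{prop:fw-triangular-array-robustness}, which we take as given. The only delicate points here are bookkeeping: the index shift must match $B_{t-1}$ exactly, and the trace condition needs the linear-in-$r_n$ bound on the accumulated initial design, which is where $r_n=o(n)$ enters. The hypothesis $r_n\to\infty$ is not actually needed for this reduction.
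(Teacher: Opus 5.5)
Your proposal is correct and follows essentially the same route as the paper. Like the paper, you absorb the first $r_n-1$ actions into the initial matrix $B_{r_n-1}^{(n)}$, apply Proposition~\ref{prop:fw-triangular-array-robustness} using the trace bound $\lambda d+(r_n-1)L^2=o(n)$, and compare $\pi_n^{(n)}$ with the late-stage empirical measure through the mixture decomposition and the bounded diameter of $\mathcal A$. Your extra bookkeeping, namely $\delta_n=1$ for $n<n_0$ and the observation that $r_n\to\infty$ is not needed, is also sound.
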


We are now ready to prove the main result by combining the preceding lemmas and propositions. On the high-probability event from
Lemma~\ref{lem:general-linear-concentration},
Proposition~\ref{prop:linucb-approx-fw} shows that LinUCB satisfies approximate
variance maximization after a negligible initial period. The deterministic
late-stage convergence result in Lemma~\ref{lem:late-stage-approx-fw} then
implies convergence of the empirical sampling distribution to the
set of D-optimal designs.

\begin{proof}[Proof of Theorem~\ref{thm:linucb-general-D-optimal}]
By Lemma~\ref{lem:general-linear-concentration}, there exists a sequence of high-probability events $\{\mathcal{E}_n\}_{n\ge 1}$ satisfying
\begin{equation}
\label{event_1}
\lim_{n \to \infty} \mathbb{P}(\mathcal{E}_n) = 1,
\end{equation}
on which the uniform least-squares prediction error bound \eqref{ieq:estimate_error} holds.

Next, Proposition~\ref{prop:linucb-approx-fw} ensures that on each event $\mathcal{E}_n$, after the negligible initial window $r_n = o(n)$, the actions chosen by LinUCB approximately maximize the variance criterion in the sense of \eqref{approximate_maximizer}. Consequently, on $\mathcal{E}_n$, the sample path realization of the action sequence satisfies the conditions of Lemma~\ref{lem:late-stage-approx-fw}.

Applying Lemma~\ref{lem:late-stage-approx-fw} to the total empirical action distribution
\[
\pi_n^{(n)} = \frac{1}{n} \sum_{t=1}^n \delta_{A_t^{(n)}},
\]
it follows that on the event $\mathcal{E}_n$,
\[
\lim_{n \to \infty} \operatorname{dist}_{W_1}\left(\pi_n^{(n)}, \mathcal{P}_D\right) = 0.
\]
Combining this sample-path convergence with the probability limit in \eqref{event_1} establishes that
\[
\operatorname{dist}_{W_1}\left(\pi_n^{(n)}, \mathcal{P}_D\right) \xrightarrow{\mathbb{P}} 0.
\]

In the case where the D-optimal design is unique, i.e., $\mathcal{P}_D = \{\pi^\star\}$, this convergence simplifies to
\[
W_1\left(\pi_n^{(n)}, \pi^\star\right) \xrightarrow{\mathbb{P}} 0.
\]
Because the action space $\mathcal{A}$ is compact under Assumption~\ref{ass:bounded-action-set}, convergence in the $1$-Wasserstein metric is equivalent to weak convergence. We conclude that
\[
\pi_n^{(n)} \Rightarrow \pi^\star \qquad \text{in probability},
\]
which completes the proof.
\end{proof}

The proof of Theorem~\ref{thm:linucb-general-D-optimal} can therefore be summarized as a sequence of reductions. The uniform concentration bound controls the reward estimation error; together with the small-gap assumption, this reduces the LinUCB policy, after a brief initialization window, to approximate variance maximization. The approximate Fedorov--Wynn convergence result then converts this local variance maximization property into global convergence toward the set of D-optimal designs.

\section{Statistical Inference of Adaptively Collected LinUCB Data}
\label{sec:asymptotic-inference}

This section moves beyond the sampling dynamics of LinUCB to study statistical inference based on the data generated by the algorithm. In contrast to classical i.i.d. sampling environments, observations generated by LinUCB are collected adaptively, with each action selection depending on previous decisions and observed rewards. This sequential dependence makes inference delicate, since the information accumulated by the algorithm is itself random and path-dependent.

The optimal-design limit established in Theorem~\ref{thm:linucb-general-D-optimal} provides the key structural simplification needed to address this challenge. Building on this limit, we derive asymptotic normality for the regularized least-squares estimator, which justifies statistical inference based on the adaptively collected data.

\subsection{Asymptotic Normality in the Small-Gap Regime}
\label{subsec:asymptotic-normality}

To establish the limiting distribution of the regularized least-squares estimator, we impose an additional mild condition on the noise process. In addition to the uniform conditional subgaussian property required in Assumption~\ref{ass:subgua_noise}, we further assume that the conditional noise variance remains constant within each $n$-horizon problem instance and converges to a positive limit. This limiting variance serves as the scaling parameter that determines the asymptotic covariance structure in our main inference result.

\begin{assumption}[Conditional Noise Variance]
\label{ass:conditional-noise-variance}
For each $n \ge 1$, the conditional variance of the noise is constant across time; that is, there exists a sequence of constants $(\sigma_\eta^{(n)})^2 > 0$ such that, for all $1 \le t \le n$,
\[
\mathbb{E}\left[\big(\eta_t^{(n)}\big)^2 \;\middle|\; \mathcal{F}_{t-1}^{(n)}, A_t^{(n)} \right] = (\sigma_\eta^{(n)})^2.
\]
Moreover, the sequence of variances satisfies
\begin{equation}
\label{eq:limit:sigma_eta}
\lim_{n \to \infty} (\sigma_\eta^{(n)})^2 = \sigma_\eta^2 \in (0, \infty).
\end{equation}
\end{assumption}

We now study the asymptotic distribution of the regularized least-squares estimator based on data collected by LinUCB. Recall that $\mathcal{P}_D$ denotes the set of D-optimal designs. Although the D-optimal design solution itself does not have to be unique, the corresponding information matrix is uniquely determined across the entire optimal set due to the strict concavity of the D-optimality objective $\Phi(\pi)$ over the design space \citep{Fedorov1972}. We may therefore unambiguously define the unique D-optimal information matrix as
\[
M^\star : = \int_{\mathcal{A}} a a^\top \mathrm{d}\pi^\star(a) \quad \text{for any } \pi^\star \in \mathcal{P}_D.
\]

The next theorem establishes the asymptotic normality of the regularized least-squares estimator under LinUCB in the small-gap regime. 

\begin{theorem}[Asymptotic normality of LinUCB in the small-gap regime]
\label{thm:small-gap-inference}
Suppose that Assumptions~\ref{ass:subgua_noise}--\ref{ass:conditional-noise-variance} hold and that $\operatorname{span}(\mathcal{A}) = \mathbb{R}^d$. Then, as $n \to \infty$, the regularized least-squares estimator generated by
LinUCB satisfies
\[
\sqrt{n}\left(\widehat{\theta}_n^{(n)} - \theta_{\star}^{(n)}\right) \Rightarrow \mathcal{N}\left(0, \sigma_\eta^2 (M^\star)^{-1}\right).
\]
\end{theorem}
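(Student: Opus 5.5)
The proof rests on the standard decomposition of the regularized least-squares error. By \eqref{eq:observation} and \eqref{eq:least_square},
\[
\widehat\theta_n^{(n)}-\theta_\star^{(n)}=\bigl(V_n^{(n)}\bigr)^{-1}\Bigl(\sum_{t=1}^n A_t^{(n)}\eta_t^{(n)}-\lambda\theta_\star^{(n)}\Bigr),
\]
so that
\[
\sqrt n\bigl(\widehat\theta_n^{(n)}-\theta_\star^{(n)}\bigr)=\Bigl(\tfrac1n V_n^{(n)}\Bigr)^{-1}\Bigl(\tfrac1{\sqrt n}\sum_{t=1}^n A_t^{(n)}\eta_t^{(n)}-\tfrac{\lambda}{\sqrt n}\theta_\star^{(n)}\Bigr).
\]
The bias term is $O(n^{-1/2})$ by Assumption~\ref{ass:bounded-parameters}. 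The plan is to show that $\frac1n V_n^{(n)}\xrightarrow{\mathbb P}M^\star$ and that the normalized martingale sum $S_n:=n^{-1/2}\sum_t A_t^{(n)}\eta_t^{(n)}$ converges weakly to $\mathcal N(0,\sigma_\eta^2M^\star)$. Slutsky's lemma, together with invertibility of $M^\star$, then gives the limit $\mathcal N(0,\sigma_\eta^2(M^\star)^{-1}M^\star(M^\star)^{-1})=\mathcal N(0,\sigma_\eta^2(M^\star)^{-1})$. Invertibility holds because $\operatorname{span}(\mathcal A)=\mathbb R^d$, so the uniform design has nonsingular information matrix and $\Phi$ is finite at the optimum.

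\textbf{Step 1: design matrix convergence.} We have $\frac1n V_n^{(n)}=\frac{\lambda}{n}I_d+M(\pi_n^{(n)})$. The map $\pi\mapsto M(\pi)$ is continuous in $W_1$ on the finite set $\mathcal A$; in fact it is Lipschitz, since $a\mapsto aa^\top$ is Lipschitz on the bounded set $\mathcal A$. Moreover $M$ is constant, equal to $M^\star$, on $\mathcal P_D$. This constancy follows from strict concavity of $\log\det$ in the matrix argument: the convex combination of two optimal designs would otherwise be strictly better. Hence $\|M(\pi_n^{(n)})-M^\star\|\le C\operatorname{dist}_{W_1}(\pi_n^{(n)},\mathcal P_D)$, which tends to $0$ in probability by Theorem~\ref{thm:linucb-general-D-optimal}.

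\textbf{Step 2: martingale CLT.} The increments $\xi_{n,t}:=n^{-1/2}A_t^{(n)}\eta_t^{(n)}$ form a martingale difference array with respect to $\mathcal G_{t}^{(n)}:=\sigma(\mathcal F_{t}^{(n)},A_{t+1}^{(n)})$; the action $A_t^{(n)}$ is measurable before $\eta_t^{(n)}$ is revealed because LinUCB is deterministic given the history. I will use the Cramér--Wold device and a martingale central limit theorem for triangular arrays (e.g.\ Hall--Heyde). For a fixed direction $u$, the conditional variance is
\[
\sum_t\mathbb E[(u^\top\xi_{n,t})^2\mid\mathcal G_{t-1}^{(n)}]=(\sigma_\eta^{(n)})^2\,u^\top M(\pi_n^{(n)})u,
\]
using Assumption~\ref{ass:conditional-noise-variance}. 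By Step 1 and \eqref{eq:limit:sigma_eta}, this converges in probability to the \emph{deterministic} constant $\sigma_\eta^2u^\top M^\star u$. This is exactly the property that adaptive designs usually lack, and it is supplied here by the uniqueness of $M^\star$ even when $\mathcal P_D$ is not a singleton. The conditional Lindeberg condition follows from $\|A_t^{(n)}\|\le L$ and the uniform subgaussian bound. Uniform subgaussianity gives $\mathbb E[|\eta_t^{(n)}|^{4}\mid\cdot]\le C\sigma^4$, so a Lyapunov-type bound yields $\sum_t\mathbb E[|u^\top\xi_{n,t}|^4\mid\cdot]\le C L^4\sigma^4/n\to0$.

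\textbf{Main obstacle.} The main obstacle is the variance-stabilization condition of the martingale CLT. It requires the random predictable quadratic variation to converge in probability to a nonrandom matrix, and one must make sure this is available in the triangular-array setting. The theorem's convergence is only to the set $\mathcal P_D$, so the key argument is the reduction of Step 1 from set-valued convergence of designs to pointwise convergence of information matrices. The other point needing care is the filtration convention, namely that $A_t^{(n)}$ is predictable so that Assumption~\ref{ass:subgua_noise} gives the needed conditional moments. The final Slutsky step is then routine.
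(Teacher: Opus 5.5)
Your proposal is correct and follows essentially the same route as the paper. Both arguments use the bias/noise decomposition, obtain $n^{-1}V_n^{(n)}\xrightarrow{\mathbb P}M^\star$ from Theorem~\ref{thm:linucb-general-D-optimal} via Lipschitzness of $a\mapsto aa^\top$ and uniqueness of the D-optimal information matrix, apply a Cram\'er--Wold martingale CLT from Hall--Heyde with a deterministic variance limit (so no nesting condition is needed), and finish with Slutsky. The differences are cosmetic: you verify Lindeberg through a fourth-moment Lyapunov bound instead of the paper's $(2+\delta)$-moment bound, and your filtration $\sigma(\mathcal F_{t-1}^{(n)},A_t^{(n)})$ matches Assumption~\ref{ass:subgua_noise} directly, without requiring $A_t^{(n)}$ to be $\mathcal F_{t-1}^{(n)}$-measurable.
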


Theorem~\ref{thm:small-gap-inference} establishes that the regularized least-squares estimator satisfies a standard $\sqrt n$-asymptotic normality result, even though the sampling process is adaptive and not independent. This is consistent with the intuition suggested by the optimal-design limit: the sampling distribution induced by LinUCB converges to an optimal design for learning the unknown reward parameter.   The limiting covariance is determined by the inverse of the D-optimal information matrix $M^\star$, scaled by the limiting noise variance $\sigma_\eta^2$. Hence, $M^\star$ acts as the macroscopic information matrix governing the limiting precision of the estimator.

Theorem~\ref{thm:small-gap-inference} provides an inferential picture that differs from the fixed-gap analysis of LinUCB for the unit-ball action set studied by \citet{fan2025statistical}. Our result exhibits the standard $\sqrt{n}$ convergence rate, whereas in the fixed-gap regime the convergence rate may deteriorate to $n^{1/4}$  in directions orthogonal to the true parameter; see Theorem 1 of~\citet{fan2025statistical}. This difference arises because, in the small-gap regime, LinUCB cannot reliably distinguish the optimal actions from suboptimal alternatives. As a result, the algorithm continues to explore all relevant directions, allowing the design matrix to accumulate information throughout the action space. In contrast, the $n^{1/4}$ rate of  \citet{fan2025statistical}  is anisotropic: it occurs only in directions orthogonal to the true parameter because LinUCB eventually commits to nearly optimal actions and ceases exploring orthogonal directions. In the small-gap regime considered here, the $n^{-1/2}$  scaling of the gaps prevents such commitment, so all directions are explored sufficiently often, restoring the standard 
$\sqrt{n}$ rate in every direction.

Theorem~\ref{thm:small-gap-inference}, together with the fact that the small-gap regime captures hard instances for regret minimization, reveals a contrast between the small-gap and fixed-gap regimes under LinUCB: although the small-gap regime is statistically more challenging for regret minimization, it induces a more information-efficient sampling allocation for parameter learning.

\subsection{Proof of Theorem~\ref{thm:small-gap-inference}}

In this subsection, we prove Theorem~\ref{thm:small-gap-inference}.
First, we leverage the optimal-design limit to demonstrate that the normalized design matrix generated under LinUCB converges in probability to the D-optimal information matrix $M^\star$.  Second, we apply a martingale central limit theorem to handle the adaptive noise term. Combining these two ingredients yields the asymptotic normality of the
estimator.

\begin{proof}[Proof of Theorem~\ref{thm:small-gap-inference}]
First, we show that the normalized design matrix converges. By the definition of $\pi_n^{(n)}$, we have
\[
\frac1n\sum_{t=1}^n A_t^{(n)}(A_t^{(n)})^\top
=
\int_{\mathcal A} aa^\top  d\pi_n^{(n)}(a).
\]


Because $\mathcal A$ is compact, the map $a\mapsto aa^\top$ is bounded
and Lipschitz. Therefore, the $W_1$-convergence to the set
$\mathcal P_D$ implies
\[
\left\|
\int aa^\top  d\pi_n^{(n)}(a)-M^\star
\right\|
\xrightarrow{\mathbb P}0.
\]
Thus
\begin{align}\label{limit:MD}
\frac1n\sum_{t=1}^n A_t^{(n)}(A_t^{(n)})^\top
\xrightarrow{\mathbb P}M^\star.
\end{align}
We also have
\begin{align}\label{limit:Vn}
\frac1n V_n^{(n)}=\frac{\lambda}{n}I_d+
\frac1n\sum_{t=1}^n A_t^{(n)}(A_t^{(n)})^\top
\xrightarrow{\mathbb P}M^\star.
\end{align}
Moreover, since $\operatorname{span}(\mathcal A)=\mathbb R^d$, the
D-optimal information matrix $M^\star$ is positive definite. 

By~\eqref{eq:least_square}, we have
\[
\widehat \theta_n^{(n)}-\theta_{\star}^{(n)}
=
\bigl(V_n^{(n)}\bigr)^{-1}
\sum_{t=1}^n A_t^{(n)}\eta_t^{(n)}
-
\lambda\bigl(V_n^{(n)}\bigr)^{-1}\theta_{\star}^{(n)}.
\]
Multiplying by $\sqrt n$, we have
\[
\sqrt n(\widehat \theta_n^{(n)}-\theta_{\star}^{(n)})
=
\left(\frac1nV_n^{(n)}\right)^{-1}
\frac1{\sqrt n}\sum_{t=1}^n A_t^{(n)}\eta_t^{(n)}
-
\frac{\lambda}{\sqrt n}
\left(\frac1nV_n^{(n)}\right)^{-1}\theta_{\star}^{(n)}.
\]
Since $\{\theta_{\star}^{(n)}\}_{n\ge1}$ is uniformly bounded and
$(n^{-1}V_n^{(n)})^{-1}=O_{\mathbb P}(1)$ by~\eqref{limit:Vn}, the second
term is $o_{\mathbb P}(1)$.

It remains to prove a martingale central limit theorem for
\[
\frac1{\sqrt n}\sum_{t=1}^n A_t^{(n)}\eta_t^{(n)}.
\]
For any fixed $u\in\mathbb R^d$, define
\[
Z_{n,t}=\frac1{\sqrt n}u^\top A_t^{(n)}\eta_t^{(n)}.
\]
Since 
the action $A_t^{(n)}$ is 
$\mathcal F_{t-1}^{(n)}$-measurable, and 
\[
\mathbb E \left[\eta_t^{(n)}\mid  \mathcal F_{t-1}^{(n)},A_t^{(n)}\right]=0,
\]
$\{Z_{n,t}\}_{t=1}^n$ is a martingale difference array. 
  And
\[
\sum_{t=1}^n\mathbb E\left[Z_{n,t}^2
\mid \mathcal F_{t-1}^{(n)},A_t^{(n)}\right]
=(\sigma_\eta^{(n)})^2u^\top\left(\frac1n\sum_{t=1}^n A_t^{(n)}(A_t^{(n)})^\top\right)u.
\]
By the design-matrix convergence~\eqref{limit:MD} and the variance convergence~\eqref{eq:limit:sigma_eta},
\begin{align}\label{limit: quadratic_variation}
\sum_{t=1}^n \mathbb E\left[Z_{n,t}^2
\mid \mathcal F_{t-1}^{(n)},A_t^{(n)}\right]\xrightarrow{\mathbb P}
\sigma_\eta^2 u^\top M^\star u.
\end{align}
Now we verify the conditional Lindeberg condition. Fix
$\varepsilon>0$. By Assumption~\ref{ass:bounded-action-set},
\[
|u^\top A_t^{(n)}|\le \|u\|L=:B_u.
\]
Therefore
\[
|Z_{n,t}| \le \frac{B_u}{\sqrt n}|\eta_t^{(n)}|.
\]
Thus the event $\{|Z_{n,t}|>\varepsilon\}$ implies
\[
|\eta_t^{(n)}|> \frac{\varepsilon\sqrt n}{B_u}.
\]
By uniform conditional subgaussianity of $\{\eta_t^{(n)}\}_{1\le t\le n}$, for any fixed $\delta>0$, there is
a constant $C_\delta<\infty$ such that
\[
\sup_{n,t}\mathbb E\left[|\eta_t^{(n)}|^{2+\delta}
\mid \mathcal F^{(n)}_{t-1}\right]
\le C_\delta \qquad \text{a.s.}
\]
Using the inequality
\[
x^2\mathbf 1\{|x|>K\} \le \frac{|x|^{2+\delta}}{K^\delta},
\]
with $K=\varepsilon\sqrt n/B_u$, we get
\begin{align}
\sum_{t=1}^n\mathbb E \left[
Z_{n,t}^2\mathbf 1\{|Z_{n,t}|>\varepsilon\}
\mid \mathcal F^{(n)}_{t-1}\right] 
&\le \sum_{t=1}^n \frac{B_u^2}{n}\mathbb E \left[
(\eta_t^{(n)})^2\mathbf 1\left\{
|\eta_t^{(n)}|>\frac{\varepsilon\sqrt n}{B_u}\right\}
\mid \mathcal F^{(n)}_{t-1}\right] \nonumber\\
&\le \sum_{t=1}^n \frac{B_u^2}{n}
\left(\frac{B_u}{\varepsilon\sqrt n}\right)^\delta
\mathbb E \left[|\eta_t^{(n)}|^{2+\delta}
\mid \mathcal F^{(n)}_{t-1}\right] \nonumber\\
&\le \frac{B_u^{2+\delta}C_\delta}{\varepsilon^\delta n^{\delta/2}}
\to 0.\label{con:lindeberg}
\end{align}
Therefore the conditional Lindeberg condition holds. By the martingale central limit theorem for martingale arrays, see
Theorem~3.2 and Corollary~3.1, together with the remark following
Corollary~3.1 in \citep{HallHeyde1980}, the convergence of the predictable quadratic
variation~\eqref{limit: quadratic_variation} and the conditional Lindeberg condition~\eqref{con:lindeberg} imply the central
limit theorem
\[
\frac1{\sqrt n}\sum_{t=1}^n u^\top A_t^{(n)}\eta_t^{(n)}
\Rightarrow
\mathcal N(0,\sigma_\eta^2u^\top M^\star u).
\]
Since $u$ is arbitrary, we have
\[
\frac1{\sqrt n}\sum_{t=1}^n A_t^{(n)}\eta_t^{(n)}
\Rightarrow
\mathcal N(0,\sigma_\eta^2M^\star).
\]
Finally, combining this distribution limit with the matrix convergence in \eqref{limit:Vn} via Slutsky's theorem yields
\[
\sqrt n(\widehat \theta_n^{(n)}-\theta_{\star}^{(n)})
\Rightarrow (M^\star)^{-1}\mathcal N(0,\sigma_\eta^2M^\star)
= \mathcal N(0,\sigma_\eta^2(M^\star)^{-1}).
\]
This completes the proof. 
\end{proof}

\section{Conclusion}
\label{sec:conclusion}

We establish that, in the small-gap regime, the empirical sampling distribution induced by LinUCB converges to the set of D-optimal designs. This result adopts a mean-field perspective: the macroscopic sampling limit is characterized by the maximizers of the D-optimality criterion, which serves as a potential function. It shows that LinUCB gradually forms a sampling distribution that is statistically efficient for learning the unknown reward parameter, thereby connecting regret-driven online learning with information-efficient experimental design. Building on this optimal-design limit, we provide a refined asymptotic analysis of LinUCB’s pseudo-regret in the small-gap regime by characterizing its leading-order constant. Finally, we show that this limit yields standard $\sqrt{n}$-asymptotic normality of the regularized least-squares estimator, thereby justifying statistical inference for the reward parameter based on adaptively collected LinUCB data.

Several directions remain open. First, in settings where the D-optimal design is not unique, it remains important to determine which D-optimal design the empirical sampling distribution converges to in the limit. Addressing this question may require developing fluid or diffusion approximations for the regret or reward processes of LinUCB in the small-gap regime, and could yield a sharper characterization of the leading-order constant for the pseudo-regret in the nonunique case. Second, a natural next step is to extend the analysis beyond the small-gap regime by developing a unified mean-field description of the full phase transition across the small-gap, critical-gap, and fixed-gap regimes, in the spirit of the two-arm bandit analysis of \citet{KalvitZeevi2021}. Such a theory would provide a fuller answer to the challenging problem of sampling allocation in correlated linear bandits, and would clarify how this allocation encodes information beyond regret, such as critical thresholds for valid statistical inference on the reward parameter.

\bibliography{references}
\bibliographystyle{informs2014}

\appendix
\section{Proofs of the Main Technical Results}
\label{app:proofs-main-technical-results}

\subsection{Proof of Proposition~\ref{prop:small-gap-regret-limit} and Subsequent Discussion}
\label{appsub:proofs-coro}

\begin{proof}[Proof of Proposition~\ref{prop:small-gap-regret-limit}]
By~\eqref{theta_scale}, for every $a\in\mathcal A$,
\[
\max_{b\in\mathcal A}\langle b,\theta_\star^{(n)}\rangle
-\langle a,\theta_\star^{(n)}\rangle
=
\frac{1}{\sqrt n}\Delta(a).
\]
Therefore, by the definition of the empirical sampling distribution $\pi_n^{(n)}$,
we have
\[
\frac{\mathcal R_n}{\sqrt n}=
\frac1n\sum_{t=1}^n \Delta(A_t^{(n)})
=\int_{\mathcal A}\Delta(a)\,\pi_n^{(n)}({\rm d}a).
\]
Define the functional $\Phi:\mathcal P(\mathcal A)\to\mathbb R$ by
\[
\Phi(\mu):=
\int_{\mathcal A}\Delta(a)\mu({\rm d}a).
\]
The function $\Delta$ is Lipschitz on $\mathcal A$ with Lipschitz constant
at most $\|\theta_0\|$. Hence, for any probability measures $\mu$ and $\nu$
on $\mathcal A$,
\[
|\Phi(\mu)-\Phi(\nu)|\le\|\theta_0\|\, W_1(\mu,\nu).
\]
It follows that
\[
\operatorname{dist}
\left(
\Phi(\pi_n^{(n)}),
\{\Phi(\pi):\pi\in\mathcal P_D\}
\right)
\le
\|\theta_0\|\,
\operatorname{dist}_{W_1}\left(\pi_n^{(n)},\mathcal P_D\right).
\]
Then by Theorem~\ref{thm:linucb-general-D-optimal},
\[
\operatorname{dist}
\left(
\frac{\mathcal R_n}{\sqrt n},
\left\{
\int_{\mathcal A}\Delta(a)\,\pi({\rm d}a):
\pi\in\mathcal P_D
\right\}
\right)
\xrightarrow{\mathbb P}0.
\]

If $\mathcal P_D=\{\pi^\star\}$, then the limiting set is the singleton, and hence
\[
\frac{\mathcal R_n}{\sqrt n}
\xrightarrow{\mathbb P}
\int_{\mathcal A}\Delta(a)\,\pi^\star({\rm d}a).
\]
This completes the proof.
\end{proof}

\begin{proof}[Proof of the attainability step~\eqref{eq:information matrix} in Corollary~\ref{coro:hypercube}]
We identify the D-optimal design on $\mathcal A$. Let
\[
c_d:=
\begin{cases}
\dfrac{d+1}{4d}, & d \text{ odd},\\[6pt]
\dfrac{d+2}{4(d+1)}, & d \text{ even}.
\end{cases}
\]
The D-optimal information matrix is
\[
M^\star=c_d(I_d+\mathbf 1\mathbf 1^\top).
\]
Indeed, since
\[
(M^\star)^{-1}=\frac1{c_d}
\left(I_d-\frac{\mathbf 1\mathbf 1^\top}{d+1}\right),
\]
 for any $a\in\mathcal A$ with $k=\sum_{i=1}^d a_i$, we have
\[
a^\top (M^\star)^{-1}a=
\frac1{c_d}\left(k-\frac{k^2}{d+1}\right)
=\frac{k(d+1-k)}{c_d(d+1)}.
\]
By the choice of $c_d$,
\[
\max_{a\in\mathcal A} a^\top (M^\star)^{-1}a=d.
\]
Therefore, by the Kiefer--Wolfowitz equivalence theorem, $M^\star$ is
D-optimal. 
This finishes the proof of the attainability step~\eqref{eq:information matrix} in Corollary~\ref{coro:hypercube}.
\end{proof}

\subsection{Proof of the Concentration Bound}
\label{app:proof-concentration-bound}

\begin{proof}[Proof of Lemma~\ref{lem:general-linear-concentration}]
For notational simplicity, we suppress the superscript $^{(n)}$ throughout the
proof. 
For each action $a\in\mathcal A$, define
\[
I_t(a):=\mathbf 1\{A_t=a\},
\qquad
N_a(t):=\sum_{s=1}^t I_s(a),
\qquad
Q_t(a):=\sum_{s=1}^t I_s(a)\eta_s .
\]
Thus $N_a(t)$ is the number of times action $a$ has been selected up to
time $t$, and $Q_t(a)$ is the corresponding accumulated noise.

We first prove a maximal concentration bound for $Q_t(a)$. Fix
$a\in\mathcal A$ and $u\in\mathbb R$. Define
\[
Z_t(a,u):=\exp\left\{uQ_t(a)-\frac{u^2\sigma^2}{2}N_a(t)
\right\},\qquad 0\le t\le n.
\]
We claim that $\{Z_t(a,u)\}_{t=0}^n$ is a nonnegative supermartingale with
respect to $\{\mathcal F_t\}_{t=0}^n$. Indeed, $I_t(a)$ is
$\mathcal H_t$-measurable, where $\mathcal H_t:=\sigma(\mathcal F_{t-1},A_t)$. By Assumption~\ref{ass:subgua_noise},
\[
\mathbb E\left[\exp\left\{uI_t(a)\eta_t-\frac{u^2\sigma^2}{2}I_t(a)\right\}
\mid \mathcal H_t\right]\le 1.
\]
Therefore, using the tower property,
\[
\mathbb E[Z_t(a,u)\mid \mathcal F_{t-1}]
\le Z_{t-1}(a,u).
\]
This proves the supermartingale claim.

Since for every $M>0$, $x>0$, and $u>0$,
\[
\left\{\exists t\le n:\ N_a(t)\le M,\ Q_t(a)>x\right\}
\subseteq\left\{\max_{1\le t\le n} Z_t(a,u)
>\exp\left\{ux-\frac{u^2\sigma^2}{2}M\right\}\right\}.
\]
By Ville's inequality, 
\[
\mathbb P\left(\exists t\le n:\ N_a(t)\le M,\ Q_t(a)>x\right)
\le \exp\left\{-ux+\frac{u^2\sigma^2}{2}M\right\}.
\]
Optimizing over $u>0$ gives
\[
\mathbb P\left(\exists t\le n:\ N_a(t)\le M,\ Q_t(a)>x\right)
\le \exp\left(-\frac{x^2}{2\sigma^2 M}\right).
\]
Applying the same argument to $-Q_t(a)$, we obtain
\[
\mathbb P\left(
\exists t\le n:\ N_a(t)\le M,\ |Q_t(a)|>x\right)
\le 2\exp\left(-\frac{x^2}{2\sigma^2 M}\right).
\]

Now set $J_n:=\lceil \log_2 n\rceil$ and $\xi=2\sigma+1$. For each fixed $a\in\mathcal A$,
define
\[
E_{a,n}:=\left\{\exists t\le n:
|Q_t(a)|>\xi\sqrt{N_a(t)\log\log n}\right\}.
\]
If $N_a(t)=0$, then $Q_t(a)=0$, so the event can only occur when
$N_a(t)\ge 1$. Decomposing according to dyadic ranges of $N_a(t)$, we have
\[
E_{a,n}\subseteq
\bigcup_{r=1}^{J_n}\left\{\exists t\le n:N_a(t)\le 2^r,\
|Q_t(a)|>\xi\sqrt{2^{r-1}\log\log n}\right\}.
\]
Hence,
\begin{align*}
\mathbb P(E_{a,n})\le\sum_{r=1}^{J_n}
2\exp\left(-\frac{\xi^2 2^{r-1}\log\log n}{2\sigma^2 2^r}\right)=
2J_n(\log n)^{-\xi^2/(4\sigma^2)}.
\end{align*}
Since $\mathcal A$ is finite and $\xi>2\sigma$, a union bound over
$\mathcal A$ gives
\[
\mathbb P(\mathcal E_n)\to 1,
\]
where
\[
\mathcal E_n:=\left\{\forall a\in\mathcal A,\ \forall t\in\{1,\ldots,n\},\ |Q_t(a)|\le \xi\sqrt{N_a(t)\log\log n}
\right\}.
\]

We now work on the event $\mathcal E_n$. From the definition of the
regularized least-squares estimator,
\[
\widehat\theta_t=
V_t^{-1}\sum_{s=1}^t A_sX_s=
V_t^{-1}\sum_{s=1}^t A_s(A_s^\top\theta_\star+\eta_s).
\]
We have
\[
\widehat\theta_t-\theta_\star
=-\lambda V_t^{-1}\theta_\star
+V_t^{-1}\sum_{s=1}^t A_s\eta_s.
\]
Therefore, for any $a\in\mathcal A$,
\[
\left\langle
\widehat\theta_t-\theta_\star,a
\right\rangle=-\lambda a^\top V_t^{-1}\theta_\star
+a^\top V_t^{-1}\sum_{s=1}^t A_s\eta_s.
\]

We first bound the stochastic term. Since $\mathcal A$ is finite,
\[
\sum_{s=1}^t A_s\eta_s=\sum_{b\in\mathcal A} b Q_t(b).
\]
Thus, on $\mathcal E_n$,
\begin{align*}
\left|a^\top V_t^{-1}\sum_{s=1}^t A_s\eta_s\right|
&\le\sum_{b\in\mathcal A}|a^\top V_t^{-1}b| |Q_t(b)| \\
&\le\xi\sqrt{\log\log n}\sum_{b\in\mathcal A}
|a^\top V_t^{-1}b|\sqrt{N_b(t)}.
\end{align*}
By Cauchy's inequality,
\[
\sum_{b\in\mathcal A}|a^\top V_t^{-1}b|\sqrt{N_b(t)}
\le\sqrt |\mathcal A|\left(\sum_{b\in\mathcal A}
N_b(t)(a^\top V_t^{-1}b)^2\right)^{1/2}.
\]
Moreover,
\begin{align*}
\sum_{b\in\mathcal A}N_b(t)(a^\top V_t^{-1}b)^2&=
a^\top V_t^{-1}\left(\sum_{b\in\mathcal A}N_b(t)bb^\top
\right)V_t^{-1}a \\
&=a^\top V_t^{-1}(V_t-\lambda I_d)V_t^{-1}a \\
&=a^\top V_t^{-1}a-\lambda a^\top V_t^{-2}a \\
&\le a^\top V_t^{-1}a.
\end{align*}
Therefore,
\[
\left|a^\top V_t^{-1}\sum_{s=1}^t A_s\eta_s\right|
\le \xi\sqrt{|\mathcal A|\log\log n}  \|a\|_{V_t^{-1}}.
\]

It remains to bound the regularization bias. By Cauchy's inequality in the
$V_t^{-1}$-norm,
\[
\lambda |a^\top V_t^{-1}\theta_\star|
\le\lambda\|a\|_{V_t^{-1}}\|\theta_\star\|_{V_t^{-1}}.
\]
Since $V_t\succeq \lambda I_d$, we have
$V_t^{-1}\preceq \lambda^{-1}I_d$. Hence, by
Assumption~\ref{ass:bounded-parameters},
\[
\|\theta_\star\|_{V_t^{-1}}
\le\lambda^{-1/2}\|\theta_\star\|_2
\le\lambda^{-1/2}C_{\theta_\star}.
\]
Thus
\[
\lambda |a^\top V_t^{-1}\theta_\star|
\le\sqrt{\lambda}C_{\theta_\star}\|a\|_{V_t^{-1}}.
\]

Combining the stochastic term and the bias term, on $\mathcal E_n$, for all
$t\in\{1,\ldots,n\}$ and all $a\in\mathcal A$,
\[
\left|\left\langle\widehat\theta_t-\theta_\star,a
\right\rangle\right|
\le\left(\sqrt{\lambda}C_{\theta_\star}
+\xi\sqrt{|\mathcal A|\log\log n}\right)\|a\|_{V_t^{-1}}.
\]
Since $\mathbb P(\mathcal E_n)\to 1$, the desired result follows.
\end{proof}

\subsection{Proof of Proposition~\ref{prop:linucb-approx-fw}}
\label{app:variance-maximization}

 We now prove Proposition~\ref{prop:linucb-approx-fw}, which shows that, after a negligible initial period, the LinUCB selection rule approximately maximizes the prediction variance.

\begin{proof}[Proof of Proposition~\ref{prop:linucb-approx-fw}]
For $a\in\mathcal A$, define
\[
q_t^{(n)}(a):=\left\|a\right\|_{\left(V_{t-1}^{(n)}\right)^{-1}},
\]
and then
\[
I_t^{(n)}(a):=a^\top \widehat\theta_{t-1}^{(n)}+\sqrt{\beta_t}  q_t^{(n)}(a)
\]
be the LinUCB index.

Fix $r_n\le t\le n$. Since $\mathcal A$ is compact under Assumption~
\ref{ass:bounded-action-set} and the map
$a\mapsto q_t^{(n)}(a)$ is continuous, there exists
\[
a_t^\star\in\operatorname*{arg  max}_{a\in\mathcal A}q_t^{(n)}(a).
\]
Write
\[
q_t^\star:=q_t^{(n)}(a_t^\star)=\max_{a\in\mathcal A}q_t^{(n)}(a).
\]

Let $A_t^{(n)}$ be the action selected by LinUCB at round $t$, and write
$q_t^{\mathrm{sel}}:=q_t^{(n)}(A_t^{(n)})$.
We claim that there exists a deterministic sequence $\gamma_n\to0$, to be specified below, such that, for all sufficiently large $n$ and all $r_n\le t\le n$,
\[
q_t^{\mathrm{sel}}\ge\frac{1}{1+\gamma_n}q_t^\star .
\]
Suppose, to the contrary, that for some sufficiently large $n$ this assertion fails. Then there exists a time $t\in\{r_n,\ldots,n\}$ such that
\[
q_{t}^{\mathrm{sel}}<\frac{1}{1+\gamma_{n}}q_{t}^\star .
\]
Consequently,
\begin{align}\label{ieq:contrary}
q_t^\star-q_t^{\mathrm{sel}}>\frac{\gamma_n}{1+\gamma_n}q_t^\star,
\qquad 
q_t^\star+q_t^{\mathrm{sel}}<\left(1+\frac{1}{1+\gamma_n}\right)q_t^\star
\le 2q_t^\star .
\end{align}
Indeed, on $\mathcal E_n$, by~Lemma~\ref{lem:general-linear-concentration},
\[
\begin{aligned}
I_t^{(n)}(a_t^\star)-I_t^{(n)}(A_t^{(n)}) 
&=\left\langle a_t^\star-A_t^{(n)},\widehat\theta_{t-1}^{(n)}
\right\rangle+\sqrt{\beta_t} \left(q_t^\star-q_t^{\mathrm{sel}}\right) \\
&=\left\langle a_t^\star-A_t^{(n)},\theta_\star^{(n)}
\right\rangle+\left\langle
\widehat\theta_{t-1}^{(n)}-\theta_\star^{(n)},a_t^\star
\right\rangle-\left\langle\widehat\theta_{t-1}^{(n)}-\theta_\star^{(n)},A_t^{(n)}\right\rangle\\
&\qquad  +\sqrt{\beta_t}\left(q_t^\star-q_t^{\mathrm{sel}}\right) \\
&\ge\left\langle a_t^\star-A_t^{(n)},\theta_\star^{(n)}
\right\rangle
-C_n q_t^\star-C_n q_t^{\mathrm{sel}}
+\sqrt{\beta_t}\left(q_t^\star-q_t^{\mathrm{sel}}\right).
\end{aligned}
\]
We first lower bound $q_t^\star$. Let
\[
L_{\mathcal A}:=\max_{a\in\mathcal A}\|a\|_2 .
\]
Since $\operatorname{span}(\mathcal A)=\mathbb R^d$, we have
$L_{\mathcal A}>0$. Moreover, by Assumption~\ref{ass:bounded-action-set},
\[
V_{t-1}^{(n)}=\lambda I_d+\sum_{s=1}^{t-1}A_s^{(n)}\bigl(A_s^{(n)}\bigr)^\top
\preceq(\lambda+nL^2)I_d.
\]
Hence
\[
\left(V_{t-1}^{(n)}\right)^{-1}\succeq\frac{1}{\lambda+nL^2}I_d.
\]
Therefore,
\[
q_t^\star=\max_{a\in\mathcal A}
\left\|a\right\|_{\left(V_{t-1}^{(n)}\right)^{-1}}
\ge\frac{L_{\mathcal A}}{\sqrt{\lambda+nL^2}}.
\]

Thus, by Assumption~\ref{asm small gap}, with
\[
R_{\mathcal A}:=
\frac{C_{\mathcal{A}}}{L_{\mathcal A}}\sqrt{\lambda+L^2},
\]
we have, for all $n\ge 1$,
\begin{align}\label{ieq:mean}
\left\langle a_t^\star-A_t^{(n)},\theta_\star^{(n)}
\right\rangle
\ge
-\frac{C_{\mathcal{A}}}{\sqrt n}
\ge
- R_{\mathcal A} q_t^\star .
\end{align}
(This seems to be the only place where we actually use Assumption~\ref{asm small gap}; anything generalizing this step can also generalize the small gap condition.)

Therefore, by~\eqref{ieq:contrary} and~\eqref{ieq:mean}, we have
\[
\begin{aligned}
I_t^{(n)}(a_t^\star)-I_t^{(n)}(A_t^{(n)}) 
&\ge-R_{\mathcal A}q_t^\star-2C_nq_t^\star
+\frac{\gamma_n}{1+\gamma_n}\sqrt{\beta_t}  q_t^\star \\
&=\left[\frac{\gamma_n}{1+\gamma_n}\sqrt{\beta_t}-
2C_n-R_{\mathcal A}\right]q_t^\star .
\end{aligned}
\]

Now choose
\[
\gamma_n:=\frac{6C_n+4R_{\mathcal A}}{\sqrt{\beta_{r_n}}}.
\]
Since $\sqrt{\beta_{r_n}}\ge\sqrt{\log r_n}=\log\log n$,
and $C_n=\Theta(\sqrt{\log\log n})$, we have $\gamma_n\to0$.
Thus, for all sufficiently large $n$, $\gamma_n\in(0,1]$. Moreover, for $r_n\le t\le n$, $\sqrt{\beta_t}\ge \sqrt{\beta_{r_n}}$.
Hence, for all sufficiently large $n$,
\[
\frac{\gamma_n}{1+\gamma_n}\sqrt{\beta_t}\ge\frac{\gamma_n}{2}\sqrt{\beta_{r_n}}=3C_n+2R_{\mathcal A}.
\]
Consequently,
\[
I_t^{(n)}(a_t^\star)-I_t^{(n)}(A_t^{(n)})\ge(C_n+R_{\mathcal A})q_t^\star>0,
\]
which contradicts the fact that LinUCB selects an index maximizer. Hence,
\[
q_t^{(n)}(A_t^{(n)})
\ge
\frac{1}{1+\gamma_n}
\max_{a\in\mathcal A}q_t^{(n)}(a).
\]
Finally, define $\varepsilon_n:=1-\frac{1}{(1+\gamma_n)^2}$.
Since $\gamma_n\to0$, we have $\varepsilon_n\to0$. This
completes the proof.
\end{proof}

\subsection{Proof of Proposition~\ref{prop:fw-triangular-array-robustness}}
\label{app:fw-convergence}

We next prove Proposition~\ref{prop:fw-triangular-array-robustness}, the approximate Fedorov--Wynn convergence result. The proof follows a determinant-growth argument: approximate variance maximization drives the normalized information matrix toward the optimal determinant value, and compactness then converts this into convergence of the empirical designs.

\begin{proof}[Proof of Proposition~\ref{prop:fw-triangular-array-robustness}]
Since $\mathcal A$ is compact under Assumption~\ref{ass:bounded-action-set}, Prokhorov's theorem implies that $\mathcal P(\mathcal A)$ is compact in the  weak topology. Moreover, on a
compact metric space, weak convergence and $W_1$-convergence are equivalent. Hence $\mathcal P(\mathcal A)$ is also compact with respect to $W_1$. The map
\[
\pi\mapsto M(\pi)=\int_{\mathcal A}aa^\top  \pi({\rm d} a)
\]
is continuous, because $a\mapsto aa^\top$ is continuous and bounded on
$\mathcal A$. Hence $\mathcal P_D$ is nonempty. By
$\operatorname{span}(\mathcal A)=\mathbb R^d$, there exists a design with a
positive definite information matrix, so
\[
D^\star:=\max_{\pi\in\mathcal P(\mathcal A)}\det M(\pi)>0.
\]

Fix $\pi^\star\in\mathcal P_D$, and write
\[
M^\star:=M(\pi^\star), \qquad
c^\star:=(\det M^\star)^{1/d}=(D^\star)^{1/d}>0.
\]
For readability, we suppress the superscript $(n)$ in the following. Define
\[
h_s:=Z_s^\top B_{s-1}^{-1}Z_s,\qquad y_s:=\det(B_s)^{1/d}.
\]
By the matrix determinant lemma,
\[
\det(B_s)=\det(B_{s-1})(1+h_s),
\]
and hence
\[
y_s=y_{s-1}(1+h_s)^{1/d}.
\]

For every positive definite matrix $B_{s-1}$,
\[
\begin{aligned}
\max_{a\in\mathcal A}a^\top B_{s-1}^{-1}a\ge
\int_{\mathcal A}a^\top B_{s-1}^{-1}a  \pi^\star({\rm d} a) =
\operatorname{tr} \left(B_{s-1}^{-1}M^\star\right) \ge
d\left(\frac{\det M^\star}{\det B_{s-1}}\right)^{1/d} =
\frac{d c^\star}{y_{s-1}},
\end{aligned}
\]
where the inequality follows from the arithmetic--geometric mean inequality
applied to the eigenvalues of
$B_{s-1}^{-1/2}M^\star B_{s-1}^{-1/2}$. Therefore, by
\eqref{con_appro_large},
\[
h_s\ge (1-\delta_n)\frac{d c^\star}{y_{s-1}}.
\]
Set $\alpha_n:=(1-\delta_n)c^\star$.
For all sufficiently large $n$, $\alpha_n\in[c^\star/2,c^\star]$. Hence
\[
y_s-y_{s-1}\ge y_{s-1} \left[\left(1+\frac{d\alpha_n}{y_{s-1}}\right)^{1/d}-1\right].
\]
Since $B_s\succeq B_0\succeq \lambda I_d$, we have $y_s\ge\lambda$ for
all $s$. Then for $\alpha\in[c^\star/2,c^\star]$ and $y\ge\lambda$, define
\[
G_\alpha(y):=y\left[\left(1+\frac{d\alpha}{y}\right)^{1/d}-1\right].
\]
 Moreover, there exists $v>0$ such that
\begin{align}\label{ieq_G_alpha1}
G_\alpha(y)\ge v, \qquad \alpha\in[c^\star/2,c^\star],\ y\ge\lambda.
\end{align}
Indeed, $G_\alpha(y)>0$, and
\[
\lim_{y\to\infty}G_\alpha(y)=\alpha\ge c^\star/2,
\]
so the claim follows by continuity.

Also, there exists $C<\infty$ such that
\begin{align}\label{ieq_G_alpha2}
G_\alpha(y)\ge \alpha-\frac{C}{y}, \qquad \alpha\in[c^\star/2,c^\star],\ y\ge\lambda.
\end{align}
This follows from the uniform Taylor expansion of $(1+x)^{1/d}$ on the
compact interval $x\in[0,dc^\star/\lambda]$.

Applying the bound~\eqref{ieq_G_alpha1} with $\alpha=\alpha_n$ gives
\[
y_s-y_{s-1}\ge G_{\alpha_n}(y_{s-1})\ge v,
\]
and hence
\[
y_{s-1}\ge y_0+v(s-1)\ge \lambda+v(s-1).
\]
Using the bound~\eqref{ieq_G_alpha2}, we also have
\[
y_s-y_{s-1}\ge \alpha_n-\frac{C}{y_{s-1}}.
\]
Summing over $s=1,\ldots,m_n$, we obtain
\[
y_{m_n}\ge y_0+m_n\alpha_n-C\sum_{s=1}^{m_n}\frac1{y_{s-1}} 
\ge m_n(1-\delta_n)c^\star -C\sum_{s=1}^{m_n}\frac1{\lambda+v(s-1)}.
\]
Since
\[
\sum_{s=1}^{m_n}\frac1{\lambda+v(s-1)}=\Theta(\log m_n),
\]
we have
\[
\liminf_{n\to\infty}\frac{y_{m_n}}{m_n}\ge c^\star.
\]
Equivalently,
\[
\liminf_{n\to\infty}\det\left(\frac{B_{m_n}^{(n)}}{m_n}\right)\ge D^\star.
\]

On the other hand,
\[
\frac{B_{m_n}^{(n)}}{m_n}=M \left(\bar\pi_{m_n}^{(n)}\right)
+\frac{B_0^{(n)}}{m_n}.
\]
Since $B_0^{(n)}\succ0$ and
\[
\frac{\operatorname{tr}(B_0^{(n)})}{m_n}\to0,
\]
we have
\[
\left\|\frac{B_0^{(n)}}{m_n}\right\|_{\mathrm{op}}\le
\frac{\operatorname{tr}(B_0^{(n)})}{m_n}\to0.
\]

Now take an arbitrary subsequence. By compactness of $\mathcal P(\mathcal A)$,
there exists a further subsequence, still denoted by $n$, such that
\[
\bar\pi_{m_n}^{(n)}\to\pi\qquad\text{in }W_1
\]
for some $\pi\in\mathcal P(\mathcal A)$. Thus,
\[
M \left(\bar\pi_{m_n}^{(n)}\right)\to M(\pi).
\]
Therefore,
\[
\frac{B_{m_n}^{(n)}}{m_n} \to M(\pi),
\]
and hence
\[
\limsup_{n\to\infty}
\det\left(\frac{B_{m_n}^{(n)}}{m_n}\right)
\le \max_{\pi\in\mathcal P(\mathcal A)}\det M(\pi)=
D^\star.
\]
Together with the lower bound, this implies
\[
\det\left(\frac{B_{m_n}^{(n)}}{m_n}\right)\to D^\star.
\]
Finally, suppose for contradiction that
\[
\operatorname{dist}_{W_1} \bigl(\bar\pi_{m_n}^{(n)},\mathcal P_D\bigr)
\nrightarrow 0.
\]
Then there exist $\varepsilon>0$ and a subsequence, still denoted by $n$, such that
\[
\operatorname{dist}_{W_1}\bigl(\bar\pi_{m_n}^{(n)},\mathcal P_D\bigr)
\ge \varepsilon.
\]
By compactness, along a further subsequence (still denoted by $n$),
\[
\bar\pi_{m_n}^{(n)}\to \pi \qquad\text{in }W_1.
\]
The determinant convergence above gives
\[
\det M(\pi)=D^\star,
\]
so $\pi\in\mathcal P_D$. Therefore,
\[
\operatorname{dist}_{W_1} \bigl(\bar\pi_{m_n}^{(n)},\mathcal P_D\bigr)
\le W_1\bigl(\bar\pi_{m_n}^{(n)},\pi\bigr)\to0,
\]
which is a contradiction. Hence
\[
\operatorname{dist}_{W_1}\bigl(\bar\pi_{m_n}^{(n)},\mathcal P_D\bigr)\to0.
\]
This completes the proof.
\end{proof}

\subsection{Proof of Lemma~\ref{lem:late-stage-approx-fw}}
\label{app:late-stage-approx-fw}

We next prove Lemma~\ref{lem:late-stage-approx-fw}, the late-stage version of the approximate Fedorov--Wynn
convergence result. We remove the first $r_n-1$ actions and regard the remaining actions as a new triangular-array sequence, with the removed actions incorporated into the initial information matrix. Since $r_n=o(n)$, this initial matrix has asymptotically negligible normalized trace, and the approximate Fedorov--Wynn convergence result is applicable. It remains to show that removing the first $r_n-1$ actions does not affect the limiting
empirical distribution.

\begin{proof}[Proof of Lemma~\ref{lem:late-stage-approx-fw}]
Let
\[
m_n:=n-r_n+1.
\]
Since $r_n/n\to0$, we have $m_n\to\infty$. Define the late-stage sequence
\[
Z_s^{(n)}:=A_{r_n+s-1}^{(n)}, \qquad s=1,\ldots,m_n,
\]
and the shifted matrices
\[
\widetilde B_s^{(n)}:=B_{r_n+s-1}^{(n)}, \qquad s=0,\ldots,m_n.
\]
Equivalently,
\[
\widetilde B_0^{(n)}=B_{r_n-1}^{(n)}=\lambda I_d+\sum_{u=1}^{r_n-1}
A_u^{(n)}\bigl(A_u^{(n)}\bigr)^\top,
\]
and
\[
\widetilde B_s^{(n)}=\widetilde B_0^{(n)}+\sum_{u=1}^s
Z_u^{(n)}\bigl(Z_u^{(n)}\bigr)^\top.
\]
By condition~\eqref{con:approximate-large-infinite}, for all sufficiently
large $n$ and all $s=1,\ldots,m_n$,
\[
\bigl(Z_s^{(n)}\bigr)^\top\bigl(\widetilde B_{s-1}^{(n)}\bigr)^{-1}
Z_s^{(n)}\ge(1-\varepsilon_n)\max_{a\in\mathcal A}a^\top
\bigl(\widetilde B_{s-1}^{(n)}\bigr)^{-1}a.
\]
By Assumption~\ref{ass:bounded-action-set},
\[
\operatorname{tr}\bigl(\widetilde B_0^{(n)}\bigr)\le \lambda d+(r_n-1)L^2.
\]
Since $r_n/n\to0$ and $m_n=n-r_n+1$, we have
\[
\frac{\operatorname{tr}\bigl(\widetilde B_0^{(n)}\bigr)}{m_n}\le
\frac{\lambda d+(r_n-1)L^2}{n-r_n+1}\to0.
\]
Thus Proposition~\ref{prop:fw-triangular-array-robustness} applies to the late-stage sequence. Hence, with
\[
\bar\pi_{m_n}^{(n)}:=\frac1{m_n}\sum_{s=1}^{m_n}\delta_{Z_s^{(n)}},
\]
we obtain
\[
\operatorname{dist}_{W_1}\bigl(\bar\pi_{m_n}^{(n)},\mathcal P_D\bigr)\to0.
\]

It remains to compare the late-stage empirical distribution
$\bar\pi_{m_n}^{(n)}$ with the full empirical distribution
$\pi_n^{(n)}$. For all sufficiently large $n$, $r_n\ge2$. Define the
early-stage empirical distribution
\[
\pi_{r_n-1}^{(n)}:=\frac1{r_n-1}\sum_{u=1}^{r_n-1}\delta_{A_u^{(n)}}.
\]
Then
\[
\pi_n^{(n)}=\frac{r_n-1}{n}\pi_{r_n-1}^{(n)}+\frac{m_n}{n}\bar\pi_{m_n}^{(n)}.
\]
Since
\[
\sup_{a,b\in\mathcal A}\|a-b\|_2\leq 2L,
\]
then the $W_1$ distance between any two
probability measures supported on $\mathcal A$ is at most $2L$.
Thus, we have
\[
\begin{aligned}
W_1 \left(\pi_n^{(n)},\bar\pi_{m_n}^{(n)}\right)&=
W_1 \left(\frac{r_n-1}{n}\pi_{r_n-1}^{(n)}+\frac{m_n}{n}\bar\pi_{m_n}^{(n)}, \bar\pi_{m_n}^{(n)}\right) \\
&= \frac{r_n-1}{n} W_1 \left(\pi_{r_n-1}^{(n)},\bar\pi_{m_n}^{(n)}\right) \\
&\le \frac{r_n-1}{n}2L \to0.
\end{aligned}
\]
Therefore,
\[
\operatorname{dist}_{W_1}\bigl(\pi_n^{(n)},\mathcal P_D\bigr)
\le W_1 \left(\pi_n^{(n)},\bar\pi_{m_n}^{(n)}\right)
+\operatorname{dist}_{W_1}\bigl(\bar\pi_{m_n}^{(n)},\mathcal P_D\bigr)\to0.
\]
This completes the proof.
\end{proof}

\end{document}